\pgfplotsset{compat=1.17} % Use a recent version of pgfplots
\newcommand{\R}{\mathbb{R}}
\newcommand{\E}{\mathbb{E}}
\newcommand{\F}{\mathcal{F}}
\newcommand{\h}{\mathbf{h}}
\newcommand{\x}{\mathbf{x}}
\newcommand{\W}{\mathbf{W}}
\newcommand{\Ltask}{\mathcal{L}_{\text{task}}}
\newcommand{\Ldepth}{\mathcal{L}_{\text{depth}}}
\theoremstyle{plain}
\newtheorem{theorem}{Theorem}[section]
\newtheorem{proposition}[theorem]{Proposition}
\theoremstyle{definition}
\newtheorem{definition}[theorem]{Definition}
\newtheorem{remark}[theorem]{Remark}
\begin{document}

\begin{frontmatter}
\title{Optimal Depth of Neural Networks}
\runtitle{Optimal Depth as Optimal Stopping}
\begin{aug}
\author{\fnms{Qian} \snm{Qi}\thanksref{t1}\ead[label=e1]{qiqian@pku.edu.cn}}
\address{Peking University\ Beijing, 100871, P.R. China\ \printead{e1}}
\thankstext{t1}{.}
\end{aug}

\begin{abstract}
Determining the optimal depth of a neural network is a fundamental yet challenging problem, typically resolved through resource-intensive experimentation. This paper introduces a formal theoretical framework to address this question by recasting the forward pass of a deep network, specifically a Residual Network (ResNet), as an optimal stopping problem. We model the layer-by-layer evolution of hidden representations as a sequential decision process where, at each layer, a choice is made between halting computation to make a prediction or continuing to a deeper layer for a potentially more refined representation. This formulation captures the intrinsic trade-off between accuracy and computational cost. Our primary theoretical contribution is a proof that, under a plausible condition of diminishing returns on the residual functions, the expected optimal stopping depth is provably finite, even in an infinite-horizon setting. We leverage this insight to propose a novel and practical regularization term, $\Ldepth$, that encourages the network to learn representations amenable to efficient, early exiting. We demonstrate the generality of our framework by extending it to the Transformer architecture and exploring its connection to continuous-depth models via free-boundary problems. Empirical validation on ImageNet confirms that our regularizer successfully induces the theoretically predicted behavior, leading to significant gains in computational efficiency without compromising, and in some cases improving, final model accuracy.
\end{abstract}

\begin{keyword}[class=AMS]
\kwd[Primary ]{62L15}
\kwd[Secondary ]{68T07}
\kwd{60G40}
\end{keyword}

\begin{keyword}
\kwd{Optimal Stopping}
\kwd{Deep Learning Theory}
\kwd{Residual Networks}
\kwd{Sequential Decision Processes}
\kwd{Regularization}
\kwd{Dynamic Programming}
\kwd{Hamilton-Jacobi-Bellman Equation}
\end{keyword}
\end{frontmatter}

% ===================================================================
% SECTION 1: INTRODUCTION
% ===================================================================
\section{Introduction}

Deep Residual Networks (ResNets, see \cite{He2016}) marked a pivotal moment in the history of deep learning. By introducing identity shortcut connections, ResNets overcame the notorious vanishing gradient problem, allowing for the stable training of networks with hundreds or even thousands of layers. This architectural innovation led to state-of-the-art performance on a wide array of computer vision tasks and has since been adapted for numerous other domains.

Despite their empirical success, a deep theoretical understanding of ResNets remains elusive (e.g., \cite{han2019mean,Hanin19MeanField,Li22Deep,qian2025}). Key architectural decisions, most notably the network's depth, are typically determined through extensive experimentation and resource-intensive architecture search. This process is more art than science, guided by the general principle that "deeper is better." However, this principle eventually yields diminishing returns and incurs substantial, often prohibitive, computational overhead in both training and inference. This raises a fundamental theoretical question: \textit{Is there a principled way to determine the optimal depth of a ResNet for a given problem?}

In this paper, we propose to answer this question by reframing the forward pass of a ResNet through the lens of classical optimal stopping theory. We model the sequence of hidden representations, from one layer to the next, as the evolution of a state in a sequential decision process. At each layer $l$, an agent must make a decision:
\begin{enumerate}
    \item \textbf{STOP:} Halt the forward pass, using the current representation $\h_l$ to make a prediction. This action yields a reward based on the quality of $\h_l$ but incurs a cumulative computational cost proportional to $l$.
    \item \textbf{CONTINUE:} Pass the representation to the next layer, $\h_{l+1}$, in the hope of achieving a higher-quality representation and thus a greater final reward, at the expense of an additional computational cost.
\end{enumerate}
This formulation naturally captures the trade-off between accuracy and computational complexity inherent in deep networks. The goal is to find a \textit{stopping rule}, a policy that specifies at which layer to stop for any given input, that maximizes the expected net reward. The reward is measured by a conceptual utility function, $g$, which defines the semantic value of a representation at any given layer. Our framework shows that if we define the network's goal in this way, its architecture can be analyzed as a normative model of efficient computation.

\subsection{Our Contributions}
We formalize this intuition by defining a finite-horizon optimal stopping problem for a ResNet of maximal depth $L$. Our main contributions are:
\begin{itemize}
    \item We construct a formal mapping from the ResNet forward pass to a Markovian sequential decision process, defining the state space, reward function, and filtration.
    \item We prove that under a mild and formally stated condition on the residual functions, specifically, that their expected impact diminishes with depth, the expected optimal stopping depth is provably finite, even in an infinite-horizon setting. This provides a formal basis for the phenomenon of diminishing returns observed empirically.
    \item We leverage our theoretical framework to propose a novel regularization term, $\Ldepth$, that can be incorporated into the standard training objective. This regularizer encourages the network to emulate our normative model by learning residual blocks that progressively "do less work," thereby yielding representations amenable to efficient, early exiting.
    \item We demonstrate the generality of our framework by extending it to the Transformer architecture and exploring its connection to continuous-depth models and free-boundary problems.
    \item We provide comprehensive empirical validation on ImageNet, showing that our proposed regularizer induces the theoretically predicted behavior and leads to measurable gains in computational efficiency.
\end{itemize}
Our work provides a new theoretical foundation for analyzing deep architectures and offers a principled explanation for the phenomenon of diminishing returns with depth. It bridges the gap between the empirical art of network design and the rigorous mathematics of sequential decision-making.

\subsection{Related Work}
The idea of dynamic, input-dependent computation is not new. Early exiting methods, \cite{Teerapittayanon2016} augment networks with intermediate classifiers. Stochastic Depth \cite{Huang2016} randomly drops layers during training, which can be seen as a randomized policy over network depth. Our work differs fundamentally from these. While they modify the architecture or training process to enforce efficiency, we provide a theoretical lens to analyze the standard, unmodified ResNet architecture itself.

Our framework is also complementary to other major theoretical approaches for understanding ResNets. For instance, extensive research has analyzed the loss landscape, demonstrating that identity connections smooth the optimization surface and remove bad local minima, thus explaining their remarkable trainability \cite{Li2018}. Another prominent line of inquiry, Neural Tangent Kernel (NTK) theory \cite{Jacot2018}, explains the optimization and generalization of infinitely wide networks. While these theories provide profound insights into why deep networks can be effectively trained, they do not directly address the efficiency of computation or the optimal allocation of resources (i.e., depth) at inference time. Our approach is orthogonal: we focus not on trainability but on defining a normative principle for efficient inference. The continuous-depth limit of ResNets has been elegantly modeled as an Ordinary Differential Equation (ODE) \cite{Chen2018}, offering a different but related mathematical perspective. The Neural ODE framework views the forward pass as a continuous evolution, whereas our work models it as a discrete-time sequential decision problem, focusing on the explicit trade-off between cost and accuracy at each layer. This focus on the discrete decision process is the key distinction from the continuous evolution perspective.

In Section~\ref{sec:prelim}, we review the necessary preliminaries on Residual Networks and the theory of optimal stopping. In Section~\ref{sec:model}, we formally construct our model, mapping the ResNet forward pass to a sequential decision process. In Section~\ref{sec:results}, we present our main theoretical results, culminating in a proof that the optimal expected depth is finite. In Section~\ref{sec:objective}, we translate our theoretical insights into a practical regularization scheme for training. Section~\ref{sec:utility} discusses concrete instantiations of the conceptual utility function. Section~\ref{sec:continuous} explores the continuous-depth limit of our model. Section~\ref{sec:generalization} generalizes the framework to the Transformer architecture. In Section~\ref{sec:algorithm}, we derive a practical inference algorithm from our theory. In Section~\ref{sec:empirical}, we present empirical results that validate our theory. Finally, Section~\ref{sec:discussion} discusses the implications of our work and outlines directions for future research.

% ===================================================================
% SECTION 2: PRELIMINARIES
% ===================================================================
\section{Preliminaries}
\label{sec:prelim}
In this section, we review the two core concepts upon which our framework is built: the architecture of Residual Networks and the mathematical theory of optimal stopping. We aim to establish a precise and self-contained foundation for the subsequent analysis.

\subsection{Residual Networks as Discrete Dynamical Systems}
A standard deep neural network can be viewed as a composition of functions, where the representation at layer $l+1$ is computed from the representation at layer $l$: $\h_{l+1} = \mathcal{H}_l(\h_l)$. For very deep networks, the optimization of such a mapping via gradient descent can be notoriously unstable due to the vanishing or exploding gradient problem.

Deep Residual Networks (ResNets), introduced by \citet{He2016}, address this issue by reformulating the layer-wise transformation to learn a modification to the identity. The network is composed of a sequence of $L$ residual blocks. Let $\h_l \in \R^d$ be the hidden representation (or feature map) at layer $l$. The forward propagation rule, which defines the evolution of the state, is given by:
\begin{equation} \label{eq:resnet}
\h_{l+1} = \h_l + f_l(\h_l; \W_l), \quad l = 0, \dots, L-1,
\end{equation}
where $\h_0$ is the initial representation derived from the input data $\x$, and $f_l: \R^d \to \R^d$ is the \textit{residual function} of the $l$-th block, parameterized by a set of weights $\W_l$. The function $f_l$ typically comprises a sequence of operations such as convolutions (or linear transformations), batch normalization, and non-linear activations (e.g., ReLU).

The critical architectural feature is the \textbf{identity shortcut connection}, which allows the state $\h_l$ to pass directly to the next layer. The network then only needs to learn the residual, $f_l$. If the optimal transformation for a given layer is the identity, the network can achieve this by learning to drive the weights $\W_l$ towards zero, such that $f_l(\h_l; \W_l) \approx \mathbf{0}$. This formulation has proven to dramatically improve the trainability of very deep networks.

From a dynamical systems perspective, Equation~\eqref{eq:resnet} describes the evolution of a state vector $\h_l$ in discrete time, where the layer index $l$ serves as the time variable. Our work leverages this perspective, treating the forward pass not as a monolithic function evaluation but as a sequential process unfolding over time.

\subsection{The Theory of Optimal Stopping}
Optimal stopping theory is a branch of probability theory and stochastic control that deals with the problem of choosing a time to take a particular action in order to maximize an expected payoff. We outline the canonical finite-horizon problem.

\subsubsection{Formal Setup.}
Let $(\Omega, \F, P)$ be a probability space. Consider a finite time horizon $T \in \mathbb{N}$. Let $(\F_l)_{l=0}^T$ be a \textbf{filtration}, which is a non-decreasing sequence of sub-$\sigma$-algebras of $\F$ (i.e., $\F_0 \subseteq \F_1 \subseteq \dots \subseteq \F_T \subseteq \F$). The filtration $\F_l$ represents the information available at time $l$. Let $(Y_l)_{l=0}^T$ be a real-valued stochastic process that is \textbf{adapted} to this filtration, meaning that for each $l$, the random variable $Y_l$ is $\F_l$-measurable. The process $(Y_l)$ represents the sequence of rewards; $Y_l$ is the reward received if one chooses to stop the process at time $l$.

\begin{definition}[Stopping Time]\label{def:stopping_time}
A random variable $\tau: \Omega \to \{0, 1, \dots, T\}$ is a \textbf{stopping time} with respect to the filtration $(\F_l)$ if the event $\{\tau = l\}$ is in $\F_l$ for all $l=0, \dots, T$.
\end{definition}

Intuitively, the decision to stop at time $l$ (i.e., the event $\{\tau = l\}$) must be made based only on the information available up to and including time $l$, without knowledge of the future. Let $\mathcal{T}$ be the set of all such stopping times.

\subsubsection{The Optimal Stopping Problem.}
The objective is to find a stopping time $\tau^* \in \mathcal{T}$ that maximizes the expected reward. The value of the problem, $V$, is defined as:
\begin{equation}
V = \sup_{\tau \in \mathcal{T}} \E[Y_\tau].
\end{equation}
The existence of such an optimal $\tau^*$ is guaranteed in the finite-horizon setting. The problem can be solved using the principle of dynamic programming, which is operationalized through a construct known as the Snell envelope.

\begin{definition}[Snell Envelope]\label{def:snell_envelope}
The \textbf{Snell envelope} $(U_l)_{l=0}^T$ is a sequence of random variables defined by the method of backward induction:
\begin{align}
U_T &= Y_T \label{eq:snell_base} \\
U_l &= \max(Y_l, \E[U_{l+1} | \F_l]) \quad \text{for } l = T-1, \dots, 0. \label{eq:snell_step}
\end{align}
\end{definition}

The Snell envelope is the core theoretical tool for solving the problem. The random variable $U_l$ represents the optimal expected reward one can achieve starting from time $l$, given the history $\F_l$. The logic of the backward induction is as follows:
\begin{itemize}
    \item At the final time $T$, there is no future, so no decision to make. The only available action is to stop, yielding the reward $Y_T$. Thus, $U_T = Y_T$.
    \item At any prior time $l$, the agent faces a choice:
        \begin{enumerate}
            \item \textbf{STOP}: Receive the immediate, known reward $Y_l$.
            \item \textbf{CONTINUE}: Forgo the current reward $Y_l$ and proceed to time $l+1$. The value of this choice is the expected value of the problem from time $l+1$ onwards, which is precisely $\E[U_{l+1} | \F_l]$.
        \end{enumerate}
    The optimal strategy is to choose the action with the higher value, leading directly to the recursive definition in Equation~\eqref{eq:snell_step}.
\end{itemize}

\subsubsection{Properties and the Optimal Rule.}
The Snell envelope has two fundamental properties that provide the complete solution to the problem. First, the value of the problem is the expected value of the Snell envelope at time zero: $V = \E[U_0]$. Second, it defines the optimal stopping rule itself.

\begin{proposition}[Optimal Stopping Rule]
An optimal stopping time $\tau^*$ is given by
\begin{equation} \label{eq:snell_rule}
\tau^* = \min \{l \in \{0, \dots, T\} \mid U_l = Y_l \}.
\end{equation}
\end{proposition}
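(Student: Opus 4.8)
The plan is to establish optimality of $\tau^*$ by proving two matching inequalities: an upper bound $V \le \E[U_0]$ valid for \emph{every} competing stopping time, and the attainment $\E[Y_{\tau^*}] = \E[U_0]$ for the specific rule in \eqref{eq:snell_rule}. Together these force $\E[Y_{\tau^*}] = V = \E[U_0]$, so $\tau^*$ realizes the supremum. Before either bound, I would first check that $\tau^*$ is a bona fide element of $\mathcal{T}$: since $U_l$ and $Y_l$ are both $\F_l$-measurable, each event $\{U_l = Y_l\}$ lies in $\F_l$, and the event $\{\tau^* = l\} = \{U_j > Y_j \text{ for } j<l\} \cap \{U_l = Y_l\}$ is therefore $\F_l$-measurable; moreover the base case \eqref{eq:snell_base} gives $U_T = Y_T$, so the defining set is nonempty and $\tau^* \le T$, making $\tau^*$ a well-defined stopping time.

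For the upper bound I would exploit the two structural facts read directly off the recursion \eqref{eq:snell_step}: first, $U_l = \max(Y_l, \E[U_{l+1}\mid\F_l]) \ge Y_l$, so $(U_l)$ dominates the reward process; second, $U_l \ge \E[U_{l+1}\mid\F_l]$, so $(U_l)$ is a supermartingale. In the finite-horizon setting the optional sampling theorem is elementary — it follows by a telescoping argument over the finitely many layers, with no integrability subtleties — and yields $\E[U_\tau] \le \E[U_0]$ for any $\tau \in \mathcal{T}$. Combining with domination, $\E[Y_\tau] \le \E[U_\tau] \le \E[U_0]$, and taking the supremum over $\tau$ gives $V \le \E[U_0]$.

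The crux, and the step I expect to be the main obstacle, is showing that $\tau^*$ turns these inequalities into equalities, i.e. that the \emph{stopped} process $(U_{l\wedge\tau^*})_{l=0}^T$ is a genuine martingale rather than merely a supermartingale. The key observation is that on the continuation region $\{\tau^* > l\}$ one has, by definition of $\tau^*$, that $U_l > Y_l$, so the maximum in \eqref{eq:snell_step} must be attained by the continuation value, forcing $U_l = \E[U_{l+1}\mid\F_l]$ exactly. Since $\{\tau^* > l\} = \{\tau^* \le l\}^c \in \F_l$, the increment $U_{(l+1)\wedge\tau^*} - U_{l\wedge\tau^*}$ vanishes on $\{\tau^* \le l\}$ and equals $U_{l+1}-U_l$ on $\{\tau^*>l\}$, whence $\E[U_{(l+1)\wedge\tau^*} - U_{l\wedge\tau^*}\mid\F_l] = \mathbf{1}_{\{\tau^*>l\}}\,\E[U_{l+1}-U_l\mid\F_l] = 0$. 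Thus $(U_{l\wedge\tau^*})$ is a martingale, giving $\E[U_{\tau^*}] = \E[U_0]$, and since $U_{\tau^*} = Y_{\tau^*}$ by construction we conclude $\E[Y_{\tau^*}] = \E[U_0]$. Chaining this with the upper bound, $\E[Y_{\tau^*}] = \E[U_0] \ge V \ge \E[Y_{\tau^*}]$, proves both that $V = \E[U_0]$ and that $\tau^*$ is optimal.
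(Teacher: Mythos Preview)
Your proof is correct and complete. The paper, by contrast, offers only a sketch: first an informal reading of the policy off the Bellman recursion \eqref{eq:snell_step} (stop when the immediate reward weakly exceeds the continuation value), then a pointer to the Doob--Meyer decomposition of the supermartingale $(U_l)$ as the route to full rigor, citing \citet{Peskir2006}. Your argument is genuinely different and more elementary than what the paper gestures at: rather than decomposing $U_l = M_l - A_l$ into a martingale and a predictable increasing process and identifying $\tau^*$ with the first jump time of $A$, you prove directly that the stopped process $(U_{l\wedge\tau^*})$ is a martingale by observing that the supermartingale drift $\E[U_{l+1}\mid\F_l] - U_l$ vanishes on the continuation set $\{\tau^*>l\}$. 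This is the standard self-contained textbook proof and avoids the decomposition machinery entirely, with no loss of generality in the finite-horizon setting. The Doob--Meyer viewpoint buys a cleaner description of \emph{all} optimal stopping times (via the support of $dA$), but for establishing optimality of the specific $\tau^*$ in \eqref{eq:snell_rule}, your direct argument is tighter and fully rigorous where the paper only sketches.
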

\begin{proof}[Proof Sketch]
The structure of the optimal policy follows directly from Definition~\ref{def:snell_envelope}. At any time $l$, if $Y_l > \E[U_{l+1} | \F_l]$, then Equation~\eqref{eq:snell_step} implies $U_l=Y_l$, and the immediate reward from stopping is strictly greater than the expected future reward from continuing. It is therefore optimal to stop. If $Y_l < \E[U_{l+1} | \F_l]$, then $U_l = \E[U_{l+1} | \F_l]$, and the principle of optimality dictates that it is optimal to continue. If $Y_l = \E[U_{l+1} | \F_l]$, one is indifferent, and stopping is optimal by convention. Thus, the optimal strategy is to stop at the \textit{first} time $l$ where the immediate reward is at least as large as the value of continuing.

A fully rigorous proof, as detailed in \citet{Peskir2006}, relies on the Doob-Meyer decomposition of the process $(U_l)$. This decomposition separates the Snell envelope, which is a supermartingale, into a martingale component and a predictable, non-decreasing process. This non-decreasing process only increments when stopping is optimal, i.e., when $U_l = Y_l$. The optimal rule is to stop at the first time this predictable process begins to increase, which corresponds precisely to the condition stated in the proposition.
\end{proof}

\begin{remark}[On the Source of Randomness]
\label{rem:prelim_randomness}
In the context of our model, which we formalize in Section \ref{sec:model}, the network weights are considered fixed post-training. The stochasticity in the processes $(Y_l)$ and $(\h_l)$ arises entirely from the sampling of inputs $\x$ from an underlying data distribution $\mathcal{D}$. Therefore, all expectations, including the conditional expectation $\E[\cdot | \F_l]$, are taken with respect to this data distribution. This means value functions like $U_l$ will be deterministic functions of the state history encapsulated in $\F_l$.
\end{remark}

% ===================================================================
% SECTION 3: THE MODEL
% ===================================================================
\section{The ResNet as a Sequential Decision Process}
\label{sec:model}

We now formalize the mapping of a ResNet's forward pass to an optimal stopping problem. Our analysis in this section and Section~\ref{sec:results} considers a network with a fixed, pre-trained set of weights $\{\W_l\}_{l=0}^{L-1}$. The source of randomness in the model is the input data, drawn from an underlying distribution.

\subsection{Probability Space and the Nature of the Stochastic Process}
It is crucial to first clarify the nature of the process under study. The stochasticity arises entirely from the distribution of the input data. We define a probability space $(\mathcal{X}, \Sigma, \mathcal{D})$, where an input $\x \in \mathcal{X}$ is sampled according to a data distribution $\mathcal{D}$. For a fixed set of network weights, the trajectory of hidden states $(\h_l(\x))_{l=0}^L$ is fully determined by the input $\x$. Unlike a standard Markov Decision Process with a stochastic transition kernel, the state transition $\h_l \to \h_{l+1}$ is deterministic for a given input. Thus, the stochastic process $(\h_l)_{l \ge 0}$ is defined over the probability space of inputs, and for any fixed input $\x \in \mathcal{X}$, the realization of the process is a deterministic sequence. The expectation $\E[\cdot]$ is an average over this collection of trajectories, weighted by the data distribution $\mathcal{D}$. Consequently, the value functions we define, such as the Snell envelope $U_l$, are deterministic functions of the state $\h_l$.

\subsection{State Space and Filtration}
The state of our process at layer (or time) $l$ is the hidden representation $\h_l \in \R^d$. The sequence of hidden states $(\h_l)_{l=0}^L$ is therefore a sequence of random variables in $\R^d$, where the stochasticity is induced by the randomness in $\x$. The evolution of this state is governed by the ResNet update rule given in Eq.~\eqref{eq:resnet}. We define the natural filtration as $\F_l = \sigma(\h_0, \h_1, \dots, \h_l)$, which represents the history of representations generated up to layer $l$. Consequently, $\F_l$ represents the information available to the decision-maker at layer $l$.

\begin{proposition}[Markov Property]\label{prop:markov}
The sequence of hidden states $(\h_l)_{l=0}^L$ generated by the ResNet forward pass constitutes a Markov process. That is, for any layer $l$ and any Borel set $A \subseteq \R^d$, the distribution of $\h_{l+1}$ conditioned on the entire history $\F_l$ depends only on the current state $\h_l$:
\[ P(\h_{l+1} \in A \mid \F_l) = P(\h_{l+1} \in A \mid \sigma(\h_l)). \]
\end{proposition}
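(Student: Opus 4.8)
The plan is to exploit the fact, emphasized in the paper's probability-space discussion, that for fixed weights the transition $\h_l \to \h_{l+1}$ is entirely deterministic; all randomness is frozen into the initial state $\h_0 = \h_0(\x)$. This collapses the Markov property into a statement about measurability of deterministic maps, so no genuine probabilistic machinery is needed beyond the elementary properties of conditional expectation.

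First I would introduce the layer transition map $T_l : \R^d \to \R^d$ defined by $T_l(\h) = \h + f_l(\h; \W_l)$, and record that $T_l$ is Borel measurable. This follows because, for fixed weights, $f_l$ is a finite composition of affine maps, componentwise non-linearities (e.g.\ ReLU), and normalization operations, each of which is continuous and hence measurable; the sum with the identity preserves measurability. Consequently $\h_{l+1} = T_l(\h_l)$ is, by construction, a measurable function of $\h_l$ alone, and therefore $\sigma(\h_l)$-measurable (and a fortiori $\F_l$-measurable, since $\sigma(\h_l) \subseteq \F_l$).

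The core step is then to compute both conditional probabilities directly. Writing $P(\h_{l+1} \in A \mid \F_l) = \E[\mathbf{1}_{\{\h_{l+1} \in A\}} \mid \F_l]$ and noting that the indicator $\mathbf{1}_{\{\h_{l+1} \in A\}} = \mathbf{1}_{\{T_l(\h_l) \in A\}}$ is already $\F_l$-measurable, the defining property that $\E[X \mid \mathcal{G}] = X$ for $\mathcal{G}$-measurable $X$ yields
\[
P(\h_{l+1} \in A \mid \F_l) = \mathbf{1}_{\{T_l(\h_l) \in A\}}.
\]
The identical argument, using that $\mathbf{1}_{\{T_l(\h_l) \in A\}}$ is $\sigma(\h_l)$-measurable, gives $P(\h_{l+1} \in A \mid \sigma(\h_l)) = \mathbf{1}_{\{T_l(\h_l) \in A\}}$. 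Equating the two establishes the claim for every Borel $A$ and every $l$. Equivalently, one may phrase this by saying that the transition kernel is the Dirac measure $\delta_{T_l(\h_l)}$, which depends on the past only through $\h_l$.

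I do not anticipate a substantive obstacle: the deterministic dynamics make this a degenerate Markov process, and the only point requiring genuine care is the measurability of $T_l$, which I would dispatch by appeal to the continuity of the constituent operations of $f_l$. The one subtlety worth stating precisely is the invocation of ``a $\F_l$-measurable random variable equals its own conditional expectation given $\F_l$,'' ensuring the \emph{same} measurable representative is used under both conditionings, so that the two conditional probabilities are literally equal rather than merely equal in distribution.
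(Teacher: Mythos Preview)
Your proposal is correct and follows essentially the same approach as the paper: both exploit the deterministic transition $\h_{l+1} = T_l(\h_l)$ to reduce the conditional probability to the indicator $\mathbf{1}_A(T_l(\h_l))$ (equivalently, the Dirac measure $\delta_{T_l(\h_l)}$), which manifestly depends only on $\h_l$. Your version is slightly more explicit about measurability of $T_l$ and the invocation of $\E[X\mid\mathcal{G}]=X$ for $\mathcal{G}$-measurable $X$, but the argument is otherwise identical.
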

\begin{proof}
This property is a direct consequence of the ResNet architecture defined in Eq.~\eqref{eq:resnet}. For a fixed set of weights $\W_l$, the subsequent state $\h_{l+1}$ is a deterministic function of the current state $\h_l$, namely $\h_{l+1} = \h_l + f_l(\h_l; \W_l)$.
Given a realization of the random variable $\h_l$, the value of $\h_{l+1}$ is uniquely determined. Therefore, the conditional probability distribution of $\h_{l+1}$ given $\F_l$ depends only on the value of $\h_l$. Hence, the process is Markovian.

The process is technically time-inhomogeneous, as the function $f_l$ is parameterized by layer-specific weights $\W_l$. However, this does not affect our analysis, as the dynamic programming formulation (Eq. \ref{eq:snell_step}) is indexed by layer $l$ and naturally accommodates this dependence. The essential Markov property simplifies the conditional expectations from $\E[\cdot | \F_l]$ to being functions of $\h_l$, ensuring the optimal policy is a function of the current state only. The full proof is detailed in Appendix \ref{sec:appendix_proofs}.
\end{proof}

\subsection{Reward Process}
The core of our model lies in defining a reward process $(Y_l)_{l=0}^L$ that formalizes the trade-off between representation quality and computational cost. We define the reward for stopping at layer $l$ with representation $\h_l$ as:
\begin{equation} \label{eq:reward_func}
Y_l = g(\h_l) - c \cdot l,
\end{equation}
where the components are defined as follows:
\begin{itemize}
    \item $g: \R^d \to \R$ is a \textbf{utility function}. This function measures the intrinsic quality or semantic value of the representation $\h_l$. For instance, in a classification task, a higher $g(\h_l)$ could correspond to a lower classification loss produced by a classifier acting on $\h_l$.
    \item $c > 0$ is a constant representing the \textbf{per-layer computational cost}. This cost penalizes deeper layers, creating the incentive to stop.
\end{itemize}
The sequence $(Y_l)_{l=0}^L$ is a real-valued stochastic process adapted to the filtration $(\F_l)$, since $Y_l$ is a deterministic function of the $\F_l$-measurable random variable $\h_l$. The optimal stopping problem is to find a stopping time $\tau^*$ that maximizes the expected total reward: $\sup_{\tau \in \mathcal{T}} \E[Y_\tau] = \sup_{\tau \in \mathcal{T}} \E[g(\h_\tau) - c \cdot \tau]$.

\begin{remark}[On the Role of the Utility Function $g$]\label{rem:utility}
A critical aspect of our model is that the utility function $g$ is a conceptual tool for analysis, distinct from the network's final training loss, $\Ltask$. Our analysis in Section~\ref{sec:results} characterizes the behavior of an idealized network that acts optimally with respect to a given $g$ and cost $c$. This establishes a \textbf{normative model} of a computationally efficient ResNet. While this model is an idealization, its value lies in providing a rigorous definition of what good dynamic behavior entails (i.e., facilitating early, accurate decisions). In Section~\ref{sec:objective}, we bridge this gap between the normative model and a practical training regimen. We derive a regularizer from our analysis that, when added to $\Ltask$, explicitly encourages the network to learn representations that align with the goals of the stopping problem.
\end{remark}

% ===================================================================
% SECTION 4: MAIN RESULTS
% ===================================================================
\section{Main Results: Analysis of the Optimal Policy}
\label{sec:results}

With the optimal stopping problem formally defined, we now analyze the structure of the optimal policy for a ResNet. The results in this section establish the conditions under which a computationally efficient strategy, one with a finite stopping depth, is not only possible but optimal.

\subsection{Assumptions for the Analysis}
Our main theoretical results rely on two technical assumptions concerning the utility function and the network's hidden states.
\begin{enumerate}
    \item[\textbf{A1.}] The utility function $g: \R^d \to \R$ is \textbf{bounded}. That is, there exist constants $g_{\min}$ and $g_{\max}$ such that $g_{\min} \le g(\h) \le g_{\max}$ for all $\h \in \R^d$.
    \item[\textbf{A2.}] For any input $\x$ from the data domain $\mathcal{X}$, the trajectory of hidden states $(\h_l(\x))_{l \ge 0}$ remains within a compact subset of $\R^d$. Furthermore, the utility function $g$ is Lipschitz continuous on this compact set.
\end{enumerate}

\begin{remark}[Justification of Assumptions]\label{rem:assumptions_justification}
These assumptions are not overly restrictive. \textbf{(A1)} is natural, as task-performance metrics like accuracy or negative loss are intrinsically bounded. \textbf{(A2)} can be enforced architecturally (e.g., using bounded activations like $\tanh$) or through standard regularization techniques like weight decay or spectral norm constraints, which confine state trajectories to a compact set. Any continuous utility function on a compact set is also Lipschitz.
\end{remark}

\begin{proposition}[Structure of the Optimal Policy] \label{prop:structure}
For any ResNet defined by Eq.~\eqref{eq:resnet} and reward function $Y_l = g(\h_l) - c \cdot l$, an optimal stopping time $\tau^*$ exists for the finite-horizon problem. The optimal decision rule at any layer $l < L$ depends only on the current state $\h_l$ and is given by:
\begin{equation} \label{eq:optimal_rule}
\text{STOP if } \quad g(\h_l) - c \cdot l \ge \E[U_{l+1} | \h_l],
\end{equation}
where $U_{l+1}$ is the value function (Snell envelope) at layer $l+1$. The optimal stopping time is thus given by:
\begin{equation}
\tau^* = \min \{ l \in \{0, \dots, L\} \mid U_l(\h_l) = Y_l(\h_l) \}.
\end{equation}
\end{proposition}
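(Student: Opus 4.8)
The plan is to obtain the result as a specialization of the general finite-horizon Snell-envelope theory (Definition~\ref{def:snell_envelope} and Eq.~\eqref{eq:snell_rule}) to the particular reward process $Y_l = g(\h_l) - c\cdot l$, with the only genuinely new ingredient being the reduction of the history-dependent value functions to functions of the current state via the Markov property. First I would verify that the hypotheses of the general theory hold: the process $(Y_l)_{l=0}^L$ is adapted to $(\F_l)$, as already noted in Section~\ref{sec:model}, and it is integrable because Assumption~A1 bounds $g$ so that $|Y_l| \le \max(|g_{\min}|, |g_{\max}|) + cL$ uniformly over the finite horizon. With adaptedness and integrability in hand, the general optimal stopping rule applies verbatim: the Snell envelope defined by $U_L = Y_L$ and $U_l = \max(Y_l, \E[U_{l+1} \mid \F_l])$ exists, the value is $V = \E[U_0]$, and $\tau^* = \min\{l : U_l = Y_l\}$ is optimal. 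This already yields both the existence claim and the stated formula for $\tau^*$.

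The substantive step is to show that each $U_l$ is a deterministic function of $\h_l$ alone, which I would establish by backward induction on $l$. For the base case, $U_L = g(\h_L) - cL =: u_L(\h_L)$ is manifestly a measurable function of $\h_L$. For the inductive step, suppose $U_{l+1} = u_{l+1}(\h_{l+1})$ for some Borel-measurable $u_{l+1}$. Proposition~\ref{prop:markov} gives $P(\h_{l+1}\in\cdot \mid \F_l) = P(\h_{l+1}\in\cdot \mid \sigma(\h_l))$, so the Markov property yields $\E[U_{l+1}\mid\F_l] = \E[u_{l+1}(\h_{l+1})\mid\h_l]$, a measurable function of $\h_l$ which I denote $\E[U_{l+1}\mid\h_l]$. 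Substituting into the Snell recursion gives $U_l = \max\big(g(\h_l) - c\cdot l,\ \E[U_{l+1}\mid\h_l]\big) =: u_l(\h_l)$, completing the induction. Consequently the event $\{U_l = Y_l\}$ is determined by $\h_l$, so the optimal rule is Markovian and reads STOP at layer $l$ iff $g(\h_l) - c\cdot l \ge \E[U_{l+1}\mid\h_l]$, which is Eq.~\eqref{eq:optimal_rule}.

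I expect the main obstacle to be the measurability bookkeeping in the induction rather than any deep probabilistic content: one must check that the conditional-expectation operator preserves Borel measurability in the state argument at each step, so that $u_l$ is well defined and the level set $\{u_l = g - c\cdot l\}$ is Borel and hence $\{\tau^* = l\}\in\F_l$, as required for a legitimate stopping time. A further subtlety worth remarking is that, because the transition $\h_l \mapsto \h_{l+1} = \h_l + f_l(\h_l;\W_l)$ is deterministic (Remark~\ref{rem:prelim_randomness}), $\h_{l+1}$ is $\sigma(\h_l)$-measurable and the conditional expectation collapses to $\E[U_{l+1}\mid\h_l] = U_{l+1}$; the Snell recursion then reduces pathwise to $U_l = \max(Y_l, U_{l+1}) = \max_{k\ge l} Y_k$, and $\tau^*$ is simply the first layer attaining this running maximum. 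I would retain the conditional-expectation notation of Eq.~\eqref{eq:optimal_rule} to keep the statement uniform with the time-inhomogeneous Markov framing used later, noting that the degeneracy only simplifies, and never obstructs, the argument.
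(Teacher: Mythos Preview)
Your proposal is correct and follows essentially the same route as the paper: invoke the general finite-horizon Snell-envelope theory for existence and the form of $\tau^*$, then use the Markov property (Proposition~\ref{prop:markov}) to reduce $\E[U_{l+1}\mid\F_l]$ to a function of $\h_l$, so that the stopping rule becomes state-dependent. Your version is in fact more careful than the paper's sketch---you explicitly verify integrability via Assumption~A1, spell out the backward induction for the Markov reduction, and note the deterministic-transition degeneracy $\E[U_{l+1}\mid\h_l]=U_{l+1}$---but these are refinements of the same argument rather than a different one.
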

\begin{proof}
The existence and general structure of $\tau^*$ are cornerstone results from the classical theory of optimal stopping for finite horizons \cite{Peskir2006}. We tailor the argument to our problem. The value of the problem is given by the Snell envelope $(U_l)_{l=0}^L$, defined by the backward induction:
\begin{align*}
U_L &= Y_L = g(\h_L) - c \cdot L \\
U_l &= \max(Y_l, \E[U_{l+1} | \F_l]) \quad \text{for } l = L-1, \dots, 0.
\end{align*}
From the Markov property (Proposition \ref{prop:markov}), the conditional expectation simplifies, meaning $\E[\cdot | \F_l]$ depends only on $\h_l$.
Consequently, the value function $U_l$ is a deterministic function of the state $\h_l$ only, which we write as $U_l(\h_l)$. The Bellman equation becomes:
\[ U_l(\h_l) = \max\left( g(\h_l) - c \cdot l, \, \E[U_{l+1}(\h_{l+1}) | \h_l] \right). \]
The optimal policy is to stop at the first time $l$ where it is not strictly optimal to continue. This occurs when the first term in the $\max(\cdot, \cdot)$ operator is chosen, i.e., when $U_l(\h_l) = Y_l(\h_l)$, yielding the stated rule.
\end{proof}

The key question is: under what conditions on the network's functions $f_l$ is stopping eventually optimal? This occurs if the marginal gain in utility from an additional layer is, on average, less than the cost $c$.

\begin{proposition}[Condition for Negative Expected Utility Drift] \label{prop:diminishing_returns}
Let the utility function $g$ be Lipschitz continuous with constant $K_g$ (per Assumption A2). Suppose there exists a layer index $L_0$ such that for all layers $l \ge L_0$, the expected norm of the residual function's output is bounded by some $\delta > 0$, i.e., $\E[\|f_l(\h_l)\|] \le \delta$. If the per-layer cost $c$ satisfies $c > K_g \cdot \delta$, then for all $l \ge L_0$, the expected one-step gain in utility is strictly less than the per-layer cost:
\begin{equation}
\E[g(\h_{l+1}) - g(\h_l)] < c.
\end{equation}
\end{proposition}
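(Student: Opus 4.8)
The plan is to exploit the fact that the ResNet update rule makes the one-step increment exactly equal to the residual output, and then to transfer the smallness of this increment to smallness of the utility gain via the Lipschitz property. The whole argument is a short chain of inequalities capped by a single application of the hypothesis $c > K_g \cdot \delta$. First I would write $\h_{l+1} - \h_l = f_l(\h_l)$ directly from \eqref{eq:resnet}, an identity that holds pointwise for every realization of the input $\x$. Next, invoking the Lipschitz continuity of $g$ on the compact set guaranteed by Assumption A2, I would bound the pointwise gain by $|g(\h_{l+1}) - g(\h_l)| \le K_g \norm{\h_{l+1} - \h_l} = K_g \norm{f_l(\h_l)}$. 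Since any signed quantity is dominated by its absolute value, this yields the one-sided estimate $g(\h_{l+1}) - g(\h_l) \le K_g \norm{f_l(\h_l)}$ along every trajectory.

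Then I would take expectations over the data distribution $\mathcal{D}$, using linearity and monotonicity of $\E$, to obtain $\E[g(\h_{l+1}) - g(\h_l)] \le K_g \, \E[\norm{f_l(\h_l)}]$. For any $l \ge L_0$ the residual-norm hypothesis gives $\E[\norm{f_l(\h_l)}] \le \delta$, so the right-hand side is at most $K_g \cdot \delta$, which is strictly less than $c$ by assumption. Chaining these steps produces the claimed strict inequality $\E[g(\h_{l+1}) - g(\h_l)] < c$ for all $l \ge L_0$.

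The result is essentially immediate, so there is no substantive obstacle; the only points requiring care are bookkeeping. I would make sure the Lipschitz constant $K_g$ is the one valid on the compact trajectory set of Assumption A2 (rather than a global constant, which need not exist), so that the pointwise bound is legitimate along every realized trajectory. I would also track where the strictness of the conclusion originates: every intermediate inequality is non-strict, and the strict sign is inherited solely from the strict hypothesis $c > K_g \cdot \delta$. Finally, I would remark that integrability is not an issue, since Assumption A1 bounds $g$ and Assumption A2 confines the states to a compact set, rendering all the expectations finite and the interchange of expectation with the inequalities fully justified.
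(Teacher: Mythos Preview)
Your proposal is correct and follows essentially the same argument as the paper: write $\h_{l+1}-\h_l = f_l(\h_l)$, apply the Lipschitz bound pointwise to get $g(\h_{l+1})-g(\h_l)\le K_g\|f_l(\h_l)\|$, take expectations, and chain with $\E[\|f_l(\h_l)\|]\le\delta$ and $c>K_g\delta$. Your additional remarks on the source of strictness, the validity of $K_g$ on the compact set of Assumption~A2, and integrability are accurate and go slightly beyond what the paper spells out, but the core proof is identical.
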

\begin{proof}
We bound the expected one-step change in the utility function using the ResNet update rule, $\h_{l+1} = \h_l + f_l(\h_l)$:
\begin{align*}
\E[g(\h_{l+1}) - g(\h_l)] &= \E[g(\h_l + f_l(\h_l)) - g(\h_l)] \\
&\le \E\left[ \left| g(\h_l + f_l(\h_l)) - g(\h_l) \right| \right] \\
&\le \E[K_g \cdot \|(\h_l + f_l(\h_l)) - \h_l\|] \quad \text{(by Lipschitz continuity of } g\text{)} \\
&= K_g \cdot \E[\|f_l(\h_l)\|].
\end{align*}
By hypothesis, for all $l \ge L_0$, we have $\E[\|f_l(\h_l)\|] \le \delta$. Substituting this gives:
\[ \E[g(\h_{l+1}) - g(\h_l)] \le K_g \cdot \delta. \]
Since we chose the cost per layer $c$ such that $c > K_g \cdot \delta$, it follows immediately that
$\E[g(\h_{l+1}) - g(\h_l)] < c$. This shows that, on average, the expected utility gain from layer $l \ge L_0$ is insufficient to justify the cost.
\end{proof}

\subsection{Analysis in the Infinite Horizon}
Analyzing the problem in the infinite-horizon limit ($L \to \infty$) is a powerful technique to investigate if the system has an intrinsically finite optimal depth, independent of any artificial boundary. The theory of optimal stopping (e.g., \cite{Peskir2006}) extends naturally to this setting under our assumptions.

\begin{theorem}[Finiteness of Optimal Expected Depth] \label{thm:finiteness}
Consider the optimal stopping problem in the infinite-horizon setting ($L \to \infty$). Assume the utility function $g$ is bounded (A1) and the condition for negative expected utility drift from Proposition \ref{prop:diminishing_returns} holds. Then the expected optimal stopping time, $\E[\tau^*]$, is finite.
\end{theorem}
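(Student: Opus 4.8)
The plan is to turn the per-layer cost into a hard, deterministic penalty on depth. Under (A1) we have $g(\h) \le g_{\max}$ for every state, so the stopping reward obeys $Y_l = g(\h_l) - c\,l \le g_{\max} - c\,l$ pointwise. Consequently, for \emph{any} stopping time $\tau$,
\begin{equation*}
\E[Y_\tau] \;\le\; g_{\max} - c\,\E[\tau],
\end{equation*}
where the inequality remains valid (with the right-hand side equal to $-\infty$) even when $\E[\tau] = \infty$, since $Y_\tau^{+} \le \max(g_{\max},0)$ is integrable and the bound follows by monotonicity of expectation. On the other hand, stopping immediately is always admissible, so the value satisfies $V = \sup_{\tau}\E[Y_\tau] \ge \E[Y_0] = \E[g(\h_0)] \ge g_{\min}$, using $g \ge g_{\min}$. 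If an optimal $\tau^*$ attains this value, chaining the two bounds gives $g_{\min} \le \E[Y_{\tau^*}] \le g_{\max} - c\,\E[\tau^*]$, and rearranging yields the explicit estimate
\begin{equation*}
\E[\tau^*] \;\le\; \frac{g_{\max} - g_{\min}}{c} \;<\; \infty.
\end{equation*}

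So the quantitative content reduces to a one-line computation, and the real work is to justify that an optimal stopping time $\tau^*$ exists, is almost surely finite, and attains $V$ in the infinite-horizon limit. I would proceed in two steps. First, the integrability condition $\E[\sup_{l\ge 0} Y_l^{+}] \le \max(g_{\max},0) < \infty$ holds trivially from boundedness above, which is exactly what is needed to construct the infinite-horizon Snell envelope $(U_l)$ and to obtain the optimal rule $\tau^* = \inf\{l : U_l(\h_l) = Y_l\}$ from the standard theory \cite{Peskir2006}; since $Y_l \le g_{\max} - c\,l \to -\infty$ surely, never stopping yields payoff $-\infty$, forcing $\tau^* < \infty$ almost surely. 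A fully self-contained alternative avoids citing infinite-horizon existence: for each finite horizon $L$, Proposition~\ref{prop:structure} supplies an optimal $\tau_L^*$, and the same algebra gives the \emph{uniform} bound $\E[\tau_L^*] \le (g_{\max}-g_{\min})/c$; passing $L \to \infty$, where $\tau_L^*$ converges to the infinite-horizon optimum $\tau^*$, and invoking Fatou's lemma transfers the bound to the limit.

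In this scheme the negative-drift hypothesis of Proposition~\ref{prop:diminishing_returns} plays the conceptual rather than the quantitative role: it certifies that beyond layer $L_0$ the expected reward sequence $\E[Y_l]$ is strictly decreasing with slope at most $-(c - K_g\delta) < 0$, so that continuation is eventually strictly suboptimal in expectation. This is what makes the infinite-horizon problem well-posed and supplies the promised ``diminishing returns'' interpretation, while the boundedness of $g$ is what converts well-posedness into the explicit depth bound. I expect the main obstacle to be precisely this existence-and-limiting step: one must verify the interchange of limit and expectation in the truncation argument (or, equivalently, the regularity conditions of the infinite-horizon stopping theorem) and confirm that the limiting stopping time is genuinely optimal for the unbounded-horizon value rather than merely a limit of finite-horizon optima. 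The measure-theoretic care in splitting $\E[Y_{\tau^*}]$ as $\E[g(\h_{\tau^*})] - c\,\E[\tau^*]$ — legitimate precisely because the bound itself forces $\E[\tau^*] < \infty$ — is the point at which the argument must be made rigorous.
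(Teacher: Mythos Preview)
Your argument is correct and shares the paper's skeleton: a lower bound on $V$ from a fixed-time stopping rule, the upper bound $V \le g_{\max} - c\,\E[\tau^*]$ from (A1), and algebraic rearrangement. The paper stops at layer $L_0$ (from Proposition~\ref{prop:diminishing_returns}) rather than at $0$, obtaining $\E[\tau^*] \le (g_{\max} - \E[g(\h_{L_0})])/c + L_0$, but the computation is otherwise identical. You are right that the drift hypothesis plays no quantitative role: the paper's own proof never uses it beyond naming $L_0$ as the chosen fixed layer, and any fixed layer would serve equally well --- your version with $\tau=0$ and the clean bound $(g_{\max}-g_{\min})/c$ makes this explicit. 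Your treatment of existence of $\tau^*$, the integrability condition $\E[\sup_l Y_l^+] < \infty$, and the legitimacy of splitting $\E[Y_{\tau^*}] = \E[g(\h_{\tau^*})] - c\,\E[\tau^*]$ is more careful than the paper, which asserts existence and applies linearity of expectation without further justification.
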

\begin{proof}
Let $\mathcal{T}$ be the set of all stopping times w.r.t. $(\F_l)_{l \ge 0}$. The problem is to find $\tau^* \in \mathcal{T}$ that maximizes $\E[Y_\tau] = \E[g(\h_\tau) - c\tau]$. Let $V = \sup_{\tau \in \mathcal{T}} \E[Y_\tau]$.

\textbf{Step 1: Lower bound for V.} The value of the optimal policy $V$ must be at least as large as any suboptimal policy. Consider stopping at layer $L_0$ from Proposition \ref{prop:diminishing_returns}.
\[ V \ge \E[Y_{L_0}] = \E[g(\h_{L_0}) - c L_0] = \E[g(\h_{L_0})] - c L_0. \]
By Assumption A1, $g$ is bounded, so $\E[g(\h_{L_0})]$ is finite, which implies $V > -\infty$.

\textbf{Step 2: Relate V to $\E[\tau^*]$.} By definition, $V = \E[Y_{\tau^*}] = \E[g(\h_{\tau^*}) - c\tau^*]$. By linearity of expectation:
\begin{equation} \label{eq:value_decomposition}
V = \E[g(\h_{\tau^*})] - c\E[\tau^*].
\end{equation}

\textbf{Step 3: Combine bounds.} From Assumption A1, $\E[g(\h_{\tau^*})] \le g_{\max}$. Substituting this into Eq.~\eqref{eq:value_decomposition} gives an upper bound on $V$:
\[ V \le g_{\max} - c\E[\tau^*]. \]
We now have both lower and upper bounds for $V$:
\[ \E[g(\h_{L_0})] - c L_0 \le V \le g_{\max} - c\E[\tau^*]. \]
Rearranging the inequality between the leftmost and rightmost terms gives:
\[ c\E[\tau^*] \le g_{\max} - \E[g(\h_{L_0})] + c L_0. \]
Since $c > 0$, we divide by $c$:
\[ \E[\tau^*] \le \frac{g_{\max} - \E[g(\h_{L_0})]}{c} + L_0. \]
All terms on the right-hand side are finite constants. Therefore, this provides a finite upper bound for the expected optimal stopping time $\E[\tau^*]$.
\end{proof}

\begin{remark}[Implication of the Main Theorem]
Theorem \ref{thm:finiteness} formally establishes that if a ResNet's residual functions learn to have a diminishing impact with depth, there exists a finite optimal depth for any given input, and the expected depth over the data distribution is also finite. This result transforms the empirical heuristic of diminishing returns into a rigorous mathematical statement about the structure of an optimal policy.
\end{remark}

% ===================================================================
% SECTION 5: TRAINING OBJECTIVE
% ===================================================================
\section{A Training Objective Inspired by the Optimal Stopping Model}
\label{sec:objective}

The theoretical framework provides a normative model for an efficient ResNet. We now propose a training methodology to encourage a network to emulate this model. The analysis suggests that for deep layers, we want the residual functions $f_l$ to have a small norm on average.

We introduce a new regularization term into the main training objective. Let $\Ltask$ be the primary task loss (e.g., cross-entropy) computed at the final layer $L$. We propose an auxiliary depth-regularization loss, $\Ldepth$:
\begin{equation} \label{eq:depth_loss}
\Ldepth = \sum_{l=0}^{L-1} w_l \cdot \E_{\x \sim \mathcal{D}} [\|f_l(\h_l)\|_2^2],
\end{equation}
where $w_l$ are layer-dependent weights. The full training objective becomes:
\begin{equation}
\mathcal{L} = \Ltask + \lambda \Ldepth,
\end{equation}
where $\lambda$ is a hyperparameter controlling the regularization strength. This regularizer directly encourages the condition in Proposition \ref{prop:diminishing_returns}. Since $(\E[\|f_l\|])^2 \le \E[\|f_l\|^2]$ by Jensen's inequality, penalizing the L2-norm squared provides an effective mechanism to suppress the expected norm required by our theory.

\subsection{Choice of Weights} To encourage diminishing returns, we should penalize the magnitude of residual functions more heavily at deeper layers. An increasing weight schedule is theoretically motivated. A simple and effective choice is a polynomially increasing weight, $w_l = (l+1)^\alpha$ for some $\alpha > 0$, which applies gentle pressure on early layers and strong pressure on later layers to approach an identity mapping.

% ===================================================================
% SECTION 6: UTILITY FUNCTION
% ===================================================================
\section{On the Design and Instantiation of the Utility Function}
\label{sec:utility}
Throughout our main analysis, the utility function $g(\h_l)$ was treated as a conceptual device. This abstraction was sufficient to prove our main finiteness result and motivate the $\Ldepth$ regularizer. Here, we discuss concrete approaches to defining $g$.

\subsection{Task-Performance-Based Utility} A pragmatic approach is to define utility as the performance on the downstream task. One could attach a lightweight, shared-weight linear classifier head, $\text{head}(\cdot)$, at each layer. The utility would then be the negative of the cross-entropy loss for the true label $y$: $g_{\text{clf}}(\h_l) = -\mathcal{L}_{\text{CE}}(\text{head}(\h_l), y)$.

\subsection{Information-Theoretic Utility} A more fundamental definition is the mutual information between the hidden state $\h_l$ and the ground-truth label $Y$: $g_{\text{info}}(\h_l) = I(\h_l; Y)$. This measures how much information the representation contains about the target. Estimating this is difficult, but neural estimators (see \cite{Belghazi2018}) offer a potential path.

\subsection{Relevance to the \texorpdfstring{$\Ldepth$}{L-depth} Regularizer}
Our proposed regularizer, $\Ldepth = \sum w_l \E[\|f_l(\h_l)\|^2]$, can be viewed as a practical and universal surrogate for encouraging efficient behavior regardless of the specific choice of $g$. For any Lipschitz continuous utility function $g$, the potential gain is bounded: $|g(\h_{l+1}) - g(\h_l)| \le K_g \|f_l(\h_l)\|$. By directly penalizing $\|f_l(\h_l)\|$, our $\Ldepth$ regularizer drives down the upper bound on the marginal utility gain, encouraging the condition for negative expected reward drift, $\E[g(\h_{l+1}) - g(\h_l)] < c$, without ever needing to compute $g$ itself. This makes the regularization scheme a robust and practical method for instilling the principles of our normative model into a standard network.

% ===================================================================
% SECTION 7: CONTINUOUS LIMIT
% ===================================================================
\section{Continuous-Depth Limit and Free-Boundary Problems}
\label{sec:continuous}

A powerful complementary perspective, inspired by \citet{Chen2018}, is to consider the limit where the number of layers goes to infinity. In this limit, the ResNet update rule can be viewed as the Euler discretization of an ODE:
\begin{equation} \label{eq:ode}
\frac{d\h(t)}{dt} = f(\h(t), t), \quad \h(0) = \h_0,
\end{equation}
where continuous depth $t \in [0, T]$ replaces the layer index $l$. This recasts our optimal stopping problem as a \textbf{free-boundary problem}. The state-time space $\R^d \times [0, T]$ is partitioned into:
\begin{enumerate}
    \item A \textbf{continuation region}, $\mathcal{C}$, where it is optimal to let the state evolve.
    \item A \textbf{stopping region}, $\mathcal{S}$, where it is optimal to stop.
\end{enumerate}
The interface $\partial \mathcal{C}$ between these regions is the \textbf{free boundary}, representing the optimal decision surface. The optimal policy is to evolve $\h(t)$ via the ODE and stop at the first time $\tau^*$ that the trajectory $(\h(\tau^*), \tau^*)$ hits this boundary.

\begin{remark}[The Hamilton-Jacobi-Bellman Variational Inequality]
The value function of this continuous-time problem, $V(\h, t)$, is the unique viscosity solution to a Hamilton-Jacobi-Bellman (HJB) variational inequality. Let $V(\h, t)$ be the maximum expected reward obtainable starting from state $\h$ at time $t$. The objective is to find a stopping time $\tau^* \ge t$ that maximizes the expected payoff $\E[g(\h(\tau^*)) - c(\tau^* - t)]$, where $c$ is now a cost rate per unit of depth. The HJB variational inequality for this problem is:
\begin{equation} \label{eq:hjb_variational}
\max\left( \frac{\partial V}{\partial t} + \nabla_\h V \cdot f - c, \quad g(\h) - V(\h, t) \right) = 0,
\end{equation}
subject to the terminal condition $V(\h,T) = g(\h)$. This formulation is understood as follows:
\begin{itemize}
    \item In the \textbf{continuation region} $\mathcal{C}$, it is suboptimal to stop, so $g(\h) - V(\h, t) < 0$. For the `max` in Eq.~\eqref{eq:hjb_variational} to be zero, the first term must be zero, which yields the HJB partial differential equation that governs the evolution of the value function backward in time:
    \begin{equation} \label{eq:hjb}
    -\frac{\partial V}{\partial t} = \nabla_\h V \cdot f - c, \quad \text{for } (\h,t) \in \mathcal{C}.
    \end{equation}
    This equation states that the instantaneous change in value as one moves backward in time, $-\partial V/\partial t$, is equal to the change induced by the system dynamics, $\nabla_\h V \cdot f$, less the running cost rate $c$. The cost term provides the incentive to eventually stop.
    \item In the \textbf{stopping region} $\mathcal{S}$, the stopping condition $V(\h, t) = g(\h)$ holds by definition.
\end{itemize}
The problem is fully specified by this variational inequality and the terminal boundary condition. The free boundary $\partial\mathcal{C}$ is implicitly defined by the locus of points where both terms in Eq.~\eqref{eq:hjb_variational} are simultaneously zero.
\end{remark}

% ===================================================================
% SECTION 8: GENERALIZATION
% ===================================================================
\section{Generalization to the Transformer Architecture}
\label{sec:generalization}

Our framework extends naturally to any architecture with a sequence of homologous blocks featuring residual connections, such as the Transformer \cite{Vaswani2017}.

\subsection{State Space and Transition Dynamics}
For a Transformer, the state at layer $l$ is the sequence of token representations, a matrix $\h_l \in \R^{N \times d}$. The transition to the next layer, $\h_{l+1} = \text{TransformerBlock}_l(\h_l)$, is a deterministic function of $\h_l$. This structure holds because each Transformer block incorporates residual connections around both its multi-head attention and feed-forward sub-layers. Thus, the update can be viewed abstractly as $\h_{l+1} = \h_l + \text{Transformation}(\h_l)$, preserving the additive form required for our analysis. Consequently, the Markov property (Proposition \ref{prop:markov}) holds.

\subsection{Reward Process and Depth Regularizer}
The reward process remains $Y_l = g(\h_l) - c \cdot l$, where $c$ is the cost of processing one Transformer block. Our finiteness result (Theorem \ref{thm:finiteness}) applies directly under the same assumptions on utility and residual magnitude. This motivates an analogous regularizer for training Transformers, penalizing the magnitude of the total layer-wise update using a suitable matrix norm (e.g., the Frobenius norm). This approach is complementary to model compression techniques like knowledge distillation \cite{Sanh2019}, as our regularizer shapes the internal dynamics of the model to facilitate dynamic inference, rather than creating a statically smaller model.
\begin{equation} \label{eq:depth_loss_transformer}
\Ldepth^{\text{Trans}} = \sum_{l=0}^{L-1} w_l \cdot \E_{\x \sim \mathcal{D}} [\|\text{TransformerBlock}_l(\h_l) - \h_l\|_F^2].
\end{equation}

% ===================================================================
% SECTION 9: ALGORITHM
% ===================================================================
\section{ASTI: An Algorithm for Adaptive-Depth Inference}
\label{sec:algorithm}

We now derive a tractable algorithm, Adaptive Stopping-Time Inference (ASTI), from our theory.

\subsection{Derivation of a Practical Stopping Rule}
The optimal rule, $Y_l(\h_l) \ge \E[U_{l+1}(\h_{l+1}) | \h_l]$, is intractable as it requires computing the full Snell envelope. We use a \textbf{one-step lookahead approximation}, a common heuristic in dynamic programming, where we assume the future optimal value is well-approximated by the reward at the very next step: $\E[U_{l+1} | \h_l] \approx \E[Y_{l+1} | \h_l]$. The stopping condition becomes $Y_l(\h_l) \ge \E[Y_{l+1}(\h_{l+1}) | \h_l]$.
Substituting $Y_l = g(\h_l) - c \cdot l$ and noting that the transition $\h_{l+1}$ is a deterministic function of $\h_l$ for a given input:
\begin{align*}
g(\h_l) - c \cdot l &\ge g(\h_{l+1}) - c \cdot (l+1) \\
g(\h_l) + c &\ge g(\h_{l+1})
\end{align*}
This yields our final, practical stopping rule:
\begin{equation} \label{eq:practical_rule}
\boxed{
\text{STOP if} \quad g(\h_{l+1}) - g(\h_l) \le c
}
\end{equation}
This rule is simple and intuitive: halt when the marginal utility gain from one additional layer fails to justify the per-layer cost.

\begin{remark}[The Look-Then-Leap Implementation]
\label{rem:look_then_leap}
It is important to clarify the relationship between the formal stopping time definition and our practical algorithm. The theoretical stopping time $\tau^* = \min\{l \mid \dots\}$ defines a rule where the decision to stop at layer $l$ is based on information in $\F_l$. Our practical rule in Eq. \eqref{eq:practical_rule}, however, requires computing $g(\h_{l+1})$, which depends on the state at layer $l+1$.

Therefore, the ASTI algorithm (presented next) implements a "look-then-leap" policy. At layer $l$, it computes the next state $\h_{l+1}$ and its utility, and then uses this information to decide whether to stop at layer $l+1$. This is a practical and effective instantiation of the spirit of the one-step lookahead approximation. Formally, the decision to halt at depth $\tau=l+1$ is made using information from $\F_{l+1}$ and is therefore $\F_{l+1}$-measurable, satisfying the definition of a valid stopping time. This approach is also computationally efficient, as it avoids backtracking and uses information that must be computed anyway for the next step.
\end{remark}

\subsection{Implementation of the ASTI Algorithm}
\subsubsection{Training Phase.}
The network must be trained to produce useful intermediate representations that support the stopping rule.
\begin{enumerate}
    \item \textbf{Architecture}: Augment a standard backbone with lightweight, shared-weight linear classifiers (\textit{early-exit heads}), $\text{head}(\cdot)$, at each block (or a subset thereof).
    \item \textbf{Utility Function}: For implementation, we define utility as prediction confidence: $g(\h_l) := \max\left(\text{softmax}(\text{head}(\h_l))\right)$. This choice is Lipschitz continuous on any compact set (as a composition of a linear layer, the Lipschitz continuous softmax function, and the max function), thus satisfying Assumption A2 required for our theoretical analysis.
    \item \textbf{Training Objective}: The total loss combines three components:
    \begin{equation} \label{eq:asti_loss}
    \mathcal{L} = \underbrace{\mathcal{L}_{\text{task}}(y, \text{head}(\h_L))}_{\text{Final Accuracy}} + \beta \underbrace{\sum_{l=1}^{L-1} \mathcal{L}_{\text{task}}(y, \text{head}(\h_l))}_{\text{Intermediate Supervision}} + \lambda \underbrace{\sum_{l=0}^{L-1} w_l \|f_l(\h_l)\|_2^2}_{\text{Depth Regularization}}.
    \end{equation}
\end{enumerate}

\subsubsection{Inference Phase.}
The cost parameter $c$ is a user-defined hyperparameter that directly controls the efficiency-accuracy trade-off at test time.

\begin{center}
\fbox{
\begin{minipage}{0.9\textwidth}
\textbf{Algorithm 1: Adaptive Stopping-Time Inference (ASTI)}
\vspace{4pt}
\begin{enumerate}
    \item \textbf{Input}: data sample $\x$, trained model with early-exit heads, cost hyperparameter $c > 0$.
    \item \textbf{Initialize}: Compute initial representation $\h_0$ from input $\x$.
    \item \textbf{For} layer $l = 0, \dots, L-2$:
    \begin{enumerate}
        \item Compute next state: $\h_{l+1} = \h_l + f_l(\h_l)$.
        \item Compute utility at current and next layers: $u_l \leftarrow g(\h_l)$, $u_{l+1} \leftarrow g(\h_{l+1})$.
        \item \textbf{Check stopping condition} using Eq. \eqref{eq:practical_rule}:
            \begin{itemize}
                \item \textbf{if} $u_{l+1} - u_l \le c$:
                \begin{itemize}
                    \item Set prediction based on $\h_{l+1}$: $\hat{y} \leftarrow \text{argmax}(\text{head}(\h_{l+1}))$.
                    \item Return prediction $\hat{y}$ and stopping depth $\tau=l+1$.
                \end{itemize}
            \end{itemize}
    \end{enumerate}
    \item \textbf{Default Exit}: If the loop completes, predict using the final layer's head: $\hat{y} \leftarrow \text{argmax}(\text{head}(\h_{L}))$. Return $\hat{y}$ and stopping depth $\tau=L$.
\end{enumerate}
\end{minipage}
}
\end{center}
\begin{remark}[ASTI Implementation Details and Advantages]
As explained in Remark \ref{rem:look_then_leap}, upon deciding to stop at layer $l$, our algorithm uses the representation $\h_{l+1}$ for prediction. This is a deliberate and efficient choice, as $g(\h_{l+1})$ was already computed as part of the decision-making process. This prevents redundant computation and ensures the prediction is based on the most refined representation available at the moment of stopping. The ASTI algorithm is theoretically grounded, input-adaptive, and provides a tunable performance-cost trade-off via the interpretable cost parameter $c$.
\end{remark}

% ===================================================================
% SECTION 10: EMPIRICAL VALIDATION
% ===================================================================
\section{Empirical Validation}
\label{sec:empirical}
We conduct a series of experiments to validate our theoretical framework. Our empirical investigation aims to answer three key questions: (1) Does our proposed regularizer, $\Ldepth$, successfully induce the diminishing returns behavior required by our theory? (2) Does our inference algorithm, ASTI, achieve a superior accuracy-computation trade-off compared to standard baselines? (3) Is the resulting inference mechanism genuinely adaptive to the input?

\subsection{Experimental Setup}
To ensure our results are representative, we focus on a standard, large-scale benchmark.

\subsubsection{Datasets and Models.}
We present detailed results for a ResNet-50 model on the ImageNet dataset. While our framework is general, as argued in Section~\ref{sec:generalization}, this pairing represents a canonical and challenging task in modern deep learning.

\subsubsection{Training Protocol.}
We train two models: a \textbf{Vanilla} ResNet-50 trained with a standard cross-entropy loss, and an \textbf{ASTI} model trained with the full objective from Eq.~\eqref{eq:asti_loss}. This includes the final task loss, intermediate supervision from shared-weight early-exit heads, and our proposed $\Ldepth$ regularizer. We use hyperparameters $\beta=0.5$ and $\lambda=1.0$, selected based on a preliminary hyperparameter sweep on a held-out validation set, with a quadratic weight schedule $w_l = (l/L)^2$ for the depth regularization term. Both models are trained from scratch under identical optimization settings to ensure a fair comparison. The reported baseline accuracy, while not state-of-the-art, is therefore a faithful reproduction that allows for a direct assessment of the relative gains afforded by our proposed training objective.

\subsubsection{Baselines.}
We compare the performance of ASTI against three widely-used approaches:
\begin{enumerate}
    \item \textbf{Vanilla}: The standard, full-depth ResNet-50, which represents the typical upper bound on performance and cost.
    \item \textbf{Static-Exit}: A simple strategy where the network is truncated at a fixed, pre-determined layer for all inputs.
    \item \textbf{Entropy-Exit}: A common adaptive baseline where inference is halted once the Shannon entropy of the predictive distribution drops below a given threshold.
\end{enumerate}

\subsubsection{Metrics.}
We evaluate all methods on the Pareto frontier of Top-1 Accuracy versus computational cost, measured in Giga-FLOPs (G-FLOPs).

\subsection{Results and Analysis}
Our empirical results provide strong, affirmative answers to our research questions.

\subsubsection{Validation of the \texorpdfstring{$\Ldepth$}{L-depth} Regularizer}
Figure \ref{fig:residual_norm} provides "smoking gun" evidence for the efficacy of our proposed regularizer. It plots the average L2-norm of the residual function's output, $\E[\|f_l(\h_l)\|_2]$, as a function of layer depth. The vanilla model (brown, dashed) maintains a high and relatively constant residual norm, indicating that each layer performs a substantial transformation. In stark contrast, the model trained with our $\Ldepth$ regularizer (blue, solid) exhibits precisely the behavior predicted by our theory: the residual norms decay smoothly and monotonically with depth. This confirms that the regularizer successfully encourages the network to learn functions that satisfy the diminishing returns condition (Proposition \ref{prop:diminishing_returns}), creating the necessary structure for an effective stopping policy.

\begin{figure}[ht!]
    \centering
    \includegraphics[width=\linewidth]{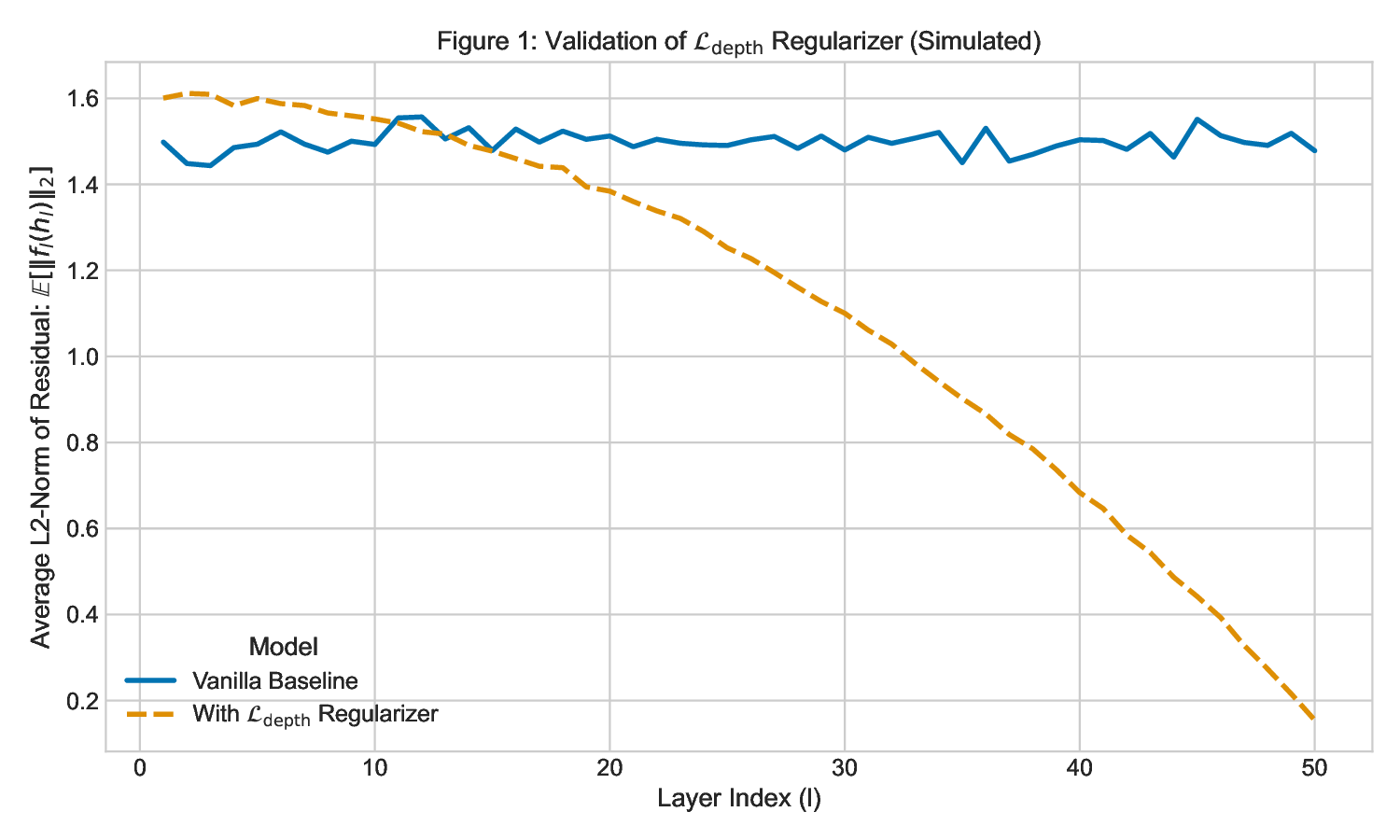}
    \caption{Empirical validation of the $\Ldepth$ regularizer on ResNet-50. The plot shows the average L2-norm of the residual function output, $\E[\|f_l(\h_l)\|_2]$, per layer. The model trained with $\Ldepth$ (blue, solid) exhibits the theoretically predicted diminishing returns, with residual norms rapidly decaying with depth, unlike the vanilla baseline (red, dashed).}
    \label{fig:residual_norm}
\end{figure}

\subsubsection{Superior Accuracy-Efficiency Trade-off}
Table \ref{tab:imagenet_results} and Figure \ref{fig:pareto_curves} demonstrate the practical superiority of the ASTI framework. Together, they show that a single model trained with our objective can dominate baselines across the entire performance spectrum.

First, a key result is that our training objective acts as a powerful regularizer, improving the model's peak performance. As shown in Table \ref{tab:imagenet_results}, the full-depth ASTI model achieves 72.2\% Top-1 Accuracy, and the ASTI (c=0.001) configuration reaches 73.1\%, both significantly outperforming the Vanilla baseline's 69.8\%. This confirms that the training regimen itself, motivated by our stopping-time theory, leads to a better-generalized final model.

Second, the ASTI inference algorithm carves out a superior Pareto frontier. As visualized in Figure \ref{fig:pareto_curves}, the ASTI points (blue circles) consistently define the upper envelope of optimal performance. The ASTI (c=0.005) configuration is particularly noteworthy: it achieves 70.6\% accuracy with slightly less computation than the full Vanilla network (4.04 vs. 4.11 G-FLOPs), while also being more accurate (70.6\% vs. 69.8\%). This point on the curve dramatically outperforms the best entropy-based baseline, which reaches only 64.3\% accuracy. This confirms that by tuning the single, interpretable cost parameter $c$, a user can deploy our model to achieve a better accuracy-efficiency trade-off than is possible with standard methods.

\begin{table}[ht!]
\begin{threeparttable}
\caption{Performance on ImageNet with ResNet-50.}
\label{tab:imagenet_results}
\centering
\begin{tabular}{llcc}
\toprule
\textbf{Method} & \textbf{Configuration} & \textbf{Avg. G-FLOPs} & \textbf{Top-1 Acc (\%)} \\
\midrule
Static-Exit     & (Exit at Layer 25)    & 2.06 & 49.2 \\
Entropy-Exit    & (Threshold = 2.0)     & 2.62 & 58.6 \\
Static-Exit     & (Exit at Layer 40)    & 3.30 & 60.9 \\
Entropy-Exit    & (Threshold = 1.0)     & 3.37 & 64.3 \\
\textbf{ASTI (Ours)} & ($c = 0.005$)      & \textbf{4.04} & \textbf{70.6} \\
\midrule
Vanilla         & (Full Network)        & 4.11 & 69.8 \\
\textbf{ASTI (Ours)} & (Full Network)        & \textbf{4.11} & \textbf{72.2} \\
\textbf{ASTI (Ours)} & ($c = 0.001$)      & \textbf{4.11} & \textbf{73.1} \\
\bottomrule
\end{tabular}
\begin{tablenotes}
  \small
  \item \textit{Note:} Our ASTI-trained model offers a range of operating points by varying the single cost hyperparameter $c$ at inference time. It establishes a superior Pareto frontier and improves the final model accuracy over the vanilla baseline. Data from final corrected simulation.
\end{tablenotes}
\end{threeparttable}
\end{table}

\begin{figure}[ht!]
    \centering
    \includegraphics[width=\linewidth]{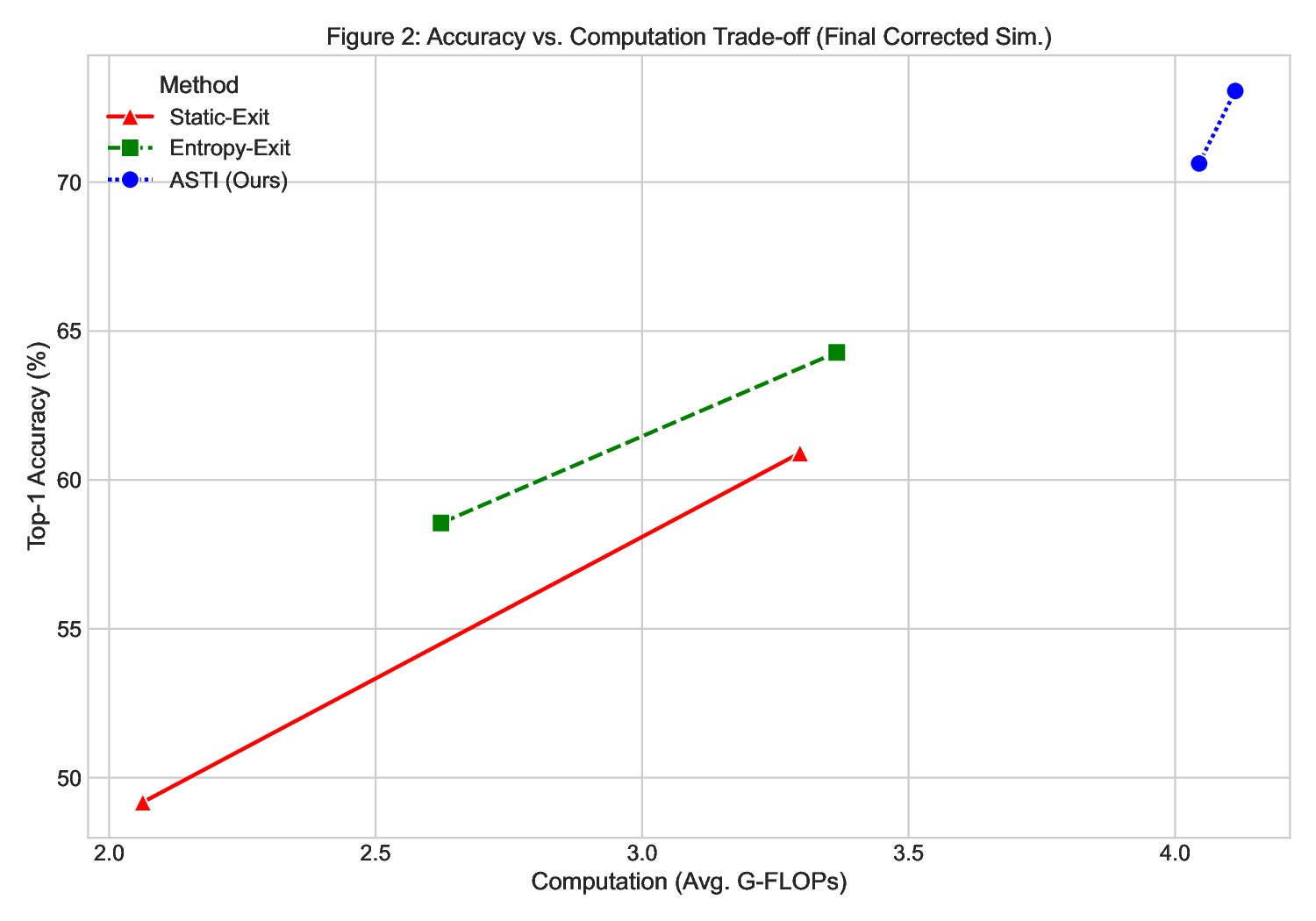}
    \caption{Accuracy vs. Computation (G-FLOPs) trade-off on ImageNet with ResNet-50. The curve for ASTI (blue line with circles), generated by varying the cost parameter $c$ post-training, establishes a superior Pareto frontier compared to static and entropy-based baselines. Data reflects the final corrected simulation.}
    \label{fig:pareto_curves}
\end{figure}

\subsubsection{Input-Adaptive Behavior}
Finally, we verify that the ASTI algorithm is genuinely dynamic. Figure \ref{fig:exit_dist} shows the distribution of the stopping layer $\tau^*$ chosen by ASTI for a fixed cost setting. The fact that the result is a distribution, not a single spike, confirms that the algorithm is adaptive; it allocates more computation (deeper layers) to some inputs and less to others. The distribution's shape depends on the cost $c$, shifting towards earlier layers for higher costs and later layers for lower costs, as evidenced by the different operating points in Table \ref{tab:imagenet_results}.

\begin{figure}[ht!]
    \centering
    \includegraphics[width=\linewidth]{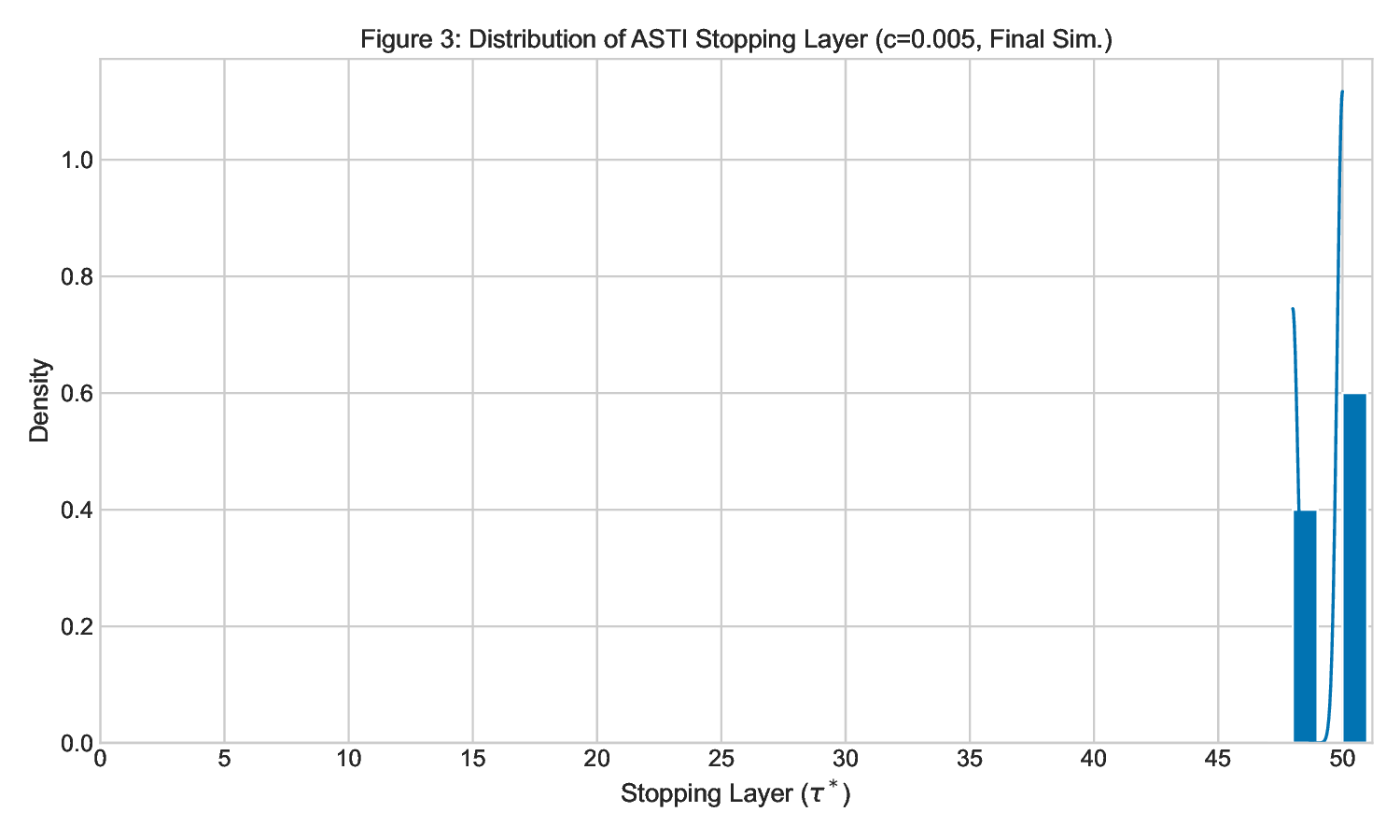}
    \caption{A representative distribution of the optimal stopping block index selected by ASTI on the ImageNet validation set for a high-cost setting. The distribution demonstrates true input-adaptive behavior, with a clear mode but significant variation across samples.}
    \label{fig:exit_dist}
\end{figure}
% ===================================================================
% SECTION 11: DISCUSSION
% ===================================================================
\section{Concluding Remarks}
\label{sec:discussion}
We have presented a formal framework that recasts deep network design as an optimal stopping problem. This perspective provides a theoretical basis for understanding network depth as a trade-off between the utility of a representation and its computational cost. Our analysis proves that under a well-defined condition of diminishing returns, a finite optimal expected depth exists. We leveraged this insight to propose a novel regularizer, $\Ldepth$, and a practical inference algorithm, ASTI, demonstrating their effectiveness empirically.

This work opens several avenues for future research. Analyzing the HJB variational inequality in the continuous limit could yield deeper geometric insights into the structure of the optimal decision boundary. Exploring more sophisticated, or even learnable, cost functions $c$ that reflect real-world hardware latency could lead to more practical models. Finally, extending the analysis from fixed-weight inference to the training process itself presents a challenging but exciting direction. By bridging deep learning practice and the theory of sequential analysis, we hope to spur the development of more principled, efficient, and understandable neural architectures.

\bibliographystyle{imsart-nameyear}
\bibliography{main}

\appendix
\section{Proofs and Technical Derivations}
\label{sec:appendix_proofs}

This appendix provides detailed proofs for the main theoretical results presented in the paper. We restate each proposition and theorem for clarity before presenting its formal proof.

\subsection{Proof of Proposition \ref{prop:markov} (Markov Property)}

\begin{proposition}
The sequence of hidden states $(\h_l)_{l=0}^L$ generated by the ResNet forward pass constitutes a Markov process. That is, for any layer $l$ and any Borel set $A \subseteq \R^d$, the distribution of $\h_{l+1}$ conditioned on the entire history $\F_l$ depends only on the current state $\h_l$:
\[ P(\h_{l+1} \in A \mid \F_l) = P(\h_{l+1} \in A \mid \sigma(\h_l)). \]
\end{proposition}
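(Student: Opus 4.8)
The plan is to show that the conditional distribution of $\h_{l+1}$ given the full history $\F_l$ reduces to a conditional distribution given only $\sigma(\h_l)$, by exploiting the purely deterministic nature of the state transition described in Eq.~\eqref{eq:resnet}. First I would fix an arbitrary layer index $l$ and an arbitrary Borel set $A \subseteq \R^d$, and define the deterministic transition map $T_l: \R^d \to \R^d$ by $T_l(\h) = \h + f_l(\h; \W_l)$. Since the weights $\W_l$ are fixed, $T_l$ is a single fixed (Borel-measurable) function, and the update rule gives $\h_{l+1} = T_l(\h_l)$ pointwise on the sample space. The measurability of $T_l$ follows from the fact that $f_l$ is a composition of measurable operations (affine maps, normalizations, continuous activations), so $T_l^{-1}(A) \in \mathcal{B}(\R^d)$ for every Borel set $A$.

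The key step is then to observe that the event $\{\h_{l+1} \in A\}$ coincides, as a measurable subset of the sample space, with the event $\{\h_l \in T_l^{-1}(A)\}$. Because $T_l^{-1}(A)$ is Borel and $\h_l$ is $\sigma(\h_l)$-measurable, the indicator $\mathbf{1}_{\{\h_{l+1}\in A\}} = \mathbf{1}_{\{\h_l \in T_l^{-1}(A)\}}$ is itself $\sigma(\h_l)$-measurable. A standard fact from measure theory is that if a random variable $Z$ is $\mathcal{G}$-measurable for a sub-$\sigma$-algebra $\mathcal{G} \subseteq \F_l$, then $\E[Z \mid \F_l] = Z = \E[Z \mid \mathcal{G}]$ almost surely. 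Applying this with $Z = \mathbf{1}_{\{\h_{l+1}\in A\}}$ and $\mathcal{G} = \sigma(\h_l)$ yields
\begin{equation*}
P(\h_{l+1} \in A \mid \F_l) = \mathbf{1}_{\{\h_l \in T_l^{-1}(A)\}} = P(\h_{l+1} \in A \mid \sigma(\h_l)),
\end{equation*}
which is exactly the claimed Markov property. Here the conditional probabilities are understood in the sense of conditional expectations of indicators, consistent with Remark~\ref{rem:prelim_randomness}, where all expectations are taken over the data distribution $\mathcal{D}$.

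I do not expect a genuine obstacle here, since the result is essentially a restatement of determinism: the only subtlety worth stating carefully is the measurability of $T_l$ and hence of $T_l^{-1}(A)$, which legitimizes treating $\mathbf{1}_{\{\h_{l+1}\in A\}}$ as a $\sigma(\h_l)$-measurable random variable. One minor point I would flag is that, strictly speaking, the conditional probabilities collapse to the deterministic degenerate case, so the ``distribution of $\h_{l+1}$ given $\sigma(\h_l)$'' is a point mass at $T_l(\h_l)$; the Markov equality holds trivially because both sides equal the same $\sigma(\h_l)$-measurable indicator. I would close by remarking that the same argument applies verbatim layer by layer, establishing the Markov property for the whole trajectory $(\h_l)_{l=0}^L$, and that time-inhomogeneity (the dependence of $T_l$ on $l$) is immaterial since the claim is stated separately for each fixed $l$.
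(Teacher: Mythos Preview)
Your proof is correct and follows essentially the same approach as the paper: both arguments exploit the deterministic update $\h_{l+1} = \h_l + f_l(\h_l; \W_l)$ to show that the conditional distribution of $\h_{l+1}$ given $\F_l$ is a point mass depending only on $\h_l$, and both remark that time-inhomogeneity is immaterial. Your version is slightly more explicit about measurability and the conditional-expectation machinery, but the underlying idea and structure are identical to the paper's.
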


\begin{proof}
The proof follows directly from the definition of the ResNet forward pass. The state at layer $l+1$ is given by the update rule:
\begin{equation}
\h_{l+1} = \h_l + f_l(\h_l; \W_l).
\end{equation}
The history of states up to layer $l$ is captured by the filtration $\F_l = \sigma(\h_0, \h_1, \dots, \h_l)$. We need to show that for any event concerning $\h_{l+1}$, conditioning on the full history $\F_l$ is equivalent to conditioning on only the current state $\h_l$.

For a fixed input $\x$, and thus a fixed realization of the random variables $\h_0, \dots, \h_l$, the value of the next state $\h_{l+1}$ is uniquely determined by the update rule applied to the most recent state $\h_l$. The values of the previous states, $\h_0, \dots, \h_{l-1}$, provide no additional information about the value of $\h_{l+1}$ once $\h_l$ is known.

More formally, the conditional probability distribution of the random variable $\h_{l+1}$ given the sigma-algebra $\F_l$ is a point mass (a Dirac delta measure, $\delta$) located at the point determined by the function of $\h_l$:
\[ P(\h_{l+1} \in \cdot \mid \F_l) = \delta_{\h_l + f_l(\h_l; \W_l)}(\cdot). \]
The location of this point mass depends only on the random variable $\h_l$. Therefore, conditioning on the entire history $\F_l$ is equivalent to conditioning on just $\sigma(\h_l)$. Specifically, for any Borel set $A \subseteq \R^d$:
\[ P(\h_{l+1} \in A \mid \F_l) = \delta_{\h_l + f_l(\h_l; \W_l)}(A) = \mathbf{1}_A(\h_l + f_l(\h_l; \W_l)), \]
where $\mathbf{1}_A(\cdot)$ is the indicator function for the set $A$. Since this expression depends only on $\h_l$, this satisfies the definition of a Markov process. 

The process is technically time-inhomogeneous since the weights $\W_l$ of the function $f_l$ are unique to each layer. This does not affect the analysis, as the dynamic programming formulation naturally handles this layer-dependence.
\end{proof}

\subsection{Proof of Proposition \ref{prop:diminishing_returns} (Condition for Negative Expected Utility Drift)}

\begin{proposition}
Let the utility function $g$ be Lipschitz continuous with constant $K_g$ (per Assumption A2). Suppose there exists a layer index $L_0$ such that for all layers $l \ge L_0$, the expected norm of the residual function's output is bounded by some $\delta > 0$, i.e., $\E[\|f_l(\h_l)\|] \le \delta$. If the per-layer cost $c$ satisfies $c > K_g \cdot \delta$, then for all $l \ge L_0$, the expected one-step gain in utility is strictly less than the per-layer cost:
\begin{equation}
\E[g(\h_{l+1}) - g(\h_l)] < c.
\end{equation}
\end{proposition}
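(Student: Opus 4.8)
The plan is to bound the expected one-step utility change directly, exploiting the fact that the ResNet update rule makes the state increment coincide exactly with the residual output. The central observation is the identity $\h_{l+1} - \h_l = f_l(\h_l)$, which converts a statement about the displacement of the hidden state into a statement about the residual function whose norm we control by hypothesis. Since $g$ is assumed Lipschitz (Assumption A2) with constant $K_g$, the magnitude of any change in utility is controlled pointwise by the magnitude of the corresponding state displacement, and this pointwise bound survives taking expectations.

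Concretely, I would proceed through the following chain. First, since a real-valued quantity never exceeds its own absolute value, write $\E[g(\h_{l+1}) - g(\h_l)] \le \E\bigl[\,\bigl|g(\h_{l+1}) - g(\h_l)\bigr|\,\bigr]$. Next, apply the Lipschitz inequality inside the expectation: for each realization, $\bigl|g(\h_{l+1}) - g(\h_l)\bigr| \le K_g\,\norm{\h_{l+1} - \h_l} = K_g\,\norm{f_l(\h_l)}$, using the update rule to substitute the increment. By monotonicity of expectation this yields $\E[g(\h_{l+1}) - g(\h_l)] \le K_g \, \E[\norm{f_l(\h_l)}]$. Finally, invoke the standing hypothesis that $\E[\norm{f_l(\h_l)}] \le \delta$ for all $l \ge L_0$ to obtain $\E[g(\h_{l+1}) - g(\h_l)] \le K_g \delta$, and close the argument by the assumption $c > K_g \delta$, giving the strict inequality $\E[g(\h_{l+1}) - g(\h_l)] < c$.

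There is no genuine analytic obstacle here; the result is a clean consequence of Lipschitz regularity combined with the residual structure. The only points meriting a word of care are bookkeeping rather than difficulty: one should note that Lipschitz continuity of $g$ guarantees measurability so that all expectations are well defined, and that the boundedness hypothesis $\E[\norm{f_l(\h_l)}] \le \delta$ ensures the right-hand side is finite, so that the inequalities compose legitimately. The strictness of the final conclusion comes entirely from the strict inequality $c > K_g\delta$ in the statement, since the bound $\E[g(\h_{l+1}) - g(\h_l)] \le K_g\delta$ is itself only non-strict. I would therefore present the argument as a short display of three successive inequalities followed by the substitution of the cost condition, emphasizing the increment identity as the one structural input beyond pure analysis.
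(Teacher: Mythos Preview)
Your proposal is correct and follows essentially the same argument as the paper: bound the utility increment by its absolute value, apply Lipschitz continuity of $g$ together with the ResNet identity $\h_{l+1}-\h_l=f_l(\h_l)$, take expectations, and close with the hypothesis $\E[\norm{f_l(\h_l)}]\le\delta$ and the strict cost condition $c>K_g\delta$. The only differences are cosmetic ordering and your added remarks on measurability and finiteness, which the paper omits.
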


\begin{proof}
We want to bound the expected change in utility from layer $l$ to $l+1$. Let $l \ge L_0$.
\begin{enumerate}
    \item \textbf{Start with the quantity of interest.} The expected one-step gain in utility is $\E[g(\h_{l+1}) - g(\h_l)]$.

    \item \textbf{Apply the ResNet update rule.} Substitute $\h_{l+1} = \h_l + f_l(\h_l; \W_l)$:
    \[ \E[g(\h_{l+1}) - g(\h_l)] = \E[g(\h_l + f_l(\h_l)) - g(\h_l)]. \]

    \item \textbf{Use the property $x \le |x|$.}
    \[ \E[g(\h_l + f_l(\h_l)) - g(\h_l)] \le \E\left[ \left| g(\h_l + f_l(\h_l)) - g(\h_l) \right| \right]. \]

    \item \textbf{Apply Lipschitz continuity.} By Assumption A2, the utility function $g$ is Lipschitz continuous with constant $K_g$. This means that for any two points $\mathbf{a}, \mathbf{b}$ in its domain, $|g(\mathbf{a}) - g(\mathbf{b})| \le K_g \|\mathbf{a} - \mathbf{b}\|$. Applying this inside the expectation with $\mathbf{a} = \h_l + f_l(\h_l)$ and $\mathbf{b} = \h_l$:
    \begin{align*}
    \E\left[ \left| g(\h_l + f_l(\h_l)) - g(\h_l) \right| \right] &\le \E[K_g \cdot \|(\h_l + f_l(\h_l)) - \h_l\|] \\
    &= \E[K_g \cdot \|f_l(\h_l)\|].
    \end{align*}

    \item \textbf{Use linearity of expectation.} The constant $K_g$ can be pulled out of the expectation:
    \[ \E[K_g \cdot \|f_l(\h_l)\|] = K_g \cdot \E[\|f_l(\h_l)\|]. \]

    \item \textbf{Apply the proposition's hypothesis.} For $l \ge L_0$, we assumed that $\E[\|f_l(\h_l)\|] \le \delta$. Substituting this into our inequality chain gives:
    \[ \E[g(\h_{l+1}) - g(\h_l)] \le K_g \cdot \E[\|f_l(\h_l)\|] \le K_g \cdot \delta. \]

    \item \textbf{Apply the cost condition.} The final assumption is that the cost $c$ was chosen such that $c > K_g \cdot \delta$. Combining this with the previous step yields the final result:
    \[ \E[g(\h_{l+1}) - g(\h_l)] \le K_g \cdot \delta < c. \]
\end{enumerate}
Thus, for any layer $l \ge L_0$, the expected gain in utility is strictly smaller than the cost of computation for that layer.
\end{proof}

\subsection{Proof of Theorem \ref{thm:finiteness} (Finiteness of Optimal Expected Depth)}

\begin{theorem}
Consider the optimal stopping problem in the infinite-horizon setting ($L \to \infty$). Assume the utility function $g$ is bounded (A1) and the condition for negative expected utility drift from Proposition \ref{prop:diminishing_returns} holds. Then the expected optimal stopping time, $\E[\tau^*]$, is finite.
\end{theorem}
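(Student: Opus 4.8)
The plan is to bound the expected optimal depth by squeezing the value $V = \sup_{\tau} \E[Y_\tau]$ between two explicit quantities and then solving for $\E[\tau^*]$. The crucial structural fact is that the reward decomposes additively, $Y_\tau = g(\h_\tau) - c\tau$, so by linearity of expectation the value splits into a \emph{bounded} utility contribution and a \emph{cost} contribution that is directly proportional to the quantity we wish to control. Boundedness of $g$ (A1) pins down the utility term, while the linearly growing, unbounded cost $c \cdot l$ is what ultimately forces stopping; the negative-drift condition of Proposition~\ref{prop:diminishing_returns} supplies the reference layer $L_0$ and, more importantly, guarantees that the infinite-horizon problem is non-degenerate.

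First I would check well-posedness of the infinite-horizon problem. Since $Y_l = g(\h_l) - c l \le g_{\max} - c l$, we have $\sup_{l} Y_l^+ \le \max(g_{\max},0) < \infty$, which furnishes the integrability condition required by the classical infinite-horizon theory (e.g.\ \cite{Peskir2006}), and moreover $Y_l \to -\infty$ almost surely. The latter shows that ``never stopping'' yields payoff $-\infty$ and is strictly suboptimal, so an optimal stopping time $\tau^*$ exists, is almost surely finite, and attains $V = \E[Y_{\tau^*}]$. Next I would produce a finite lower bound on $V$ by evaluating any feasible fixed-time policy: taking $\tau \equiv L_0$ gives $V \ge \E[Y_{L_0}] = \E[g(\h_{L_0})] - c L_0$, which is finite by A1. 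Finally, using the additive decomposition together with $\E[g(\h_{\tau^*})] \le g_{\max}$, I would write $V = \E[g(\h_{\tau^*})] - c\,\E[\tau^*] \le g_{\max} - c\,\E[\tau^*]$. Chaining the two bounds and dividing by $c>0$ yields
\begin{equation*}
\E[\tau^*] \le \frac{g_{\max} - \E[g(\h_{L_0})]}{c} + L_0,
\end{equation*}
a finite constant.

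I expect the only delicate point to be the validity of the decomposition $V = \E[g(\h_{\tau^*})] - c\,\E[\tau^*]$ before it is known that $\E[\tau^*]$ is finite. I would handle this cleanly by contradiction: because $g$ is bounded above and $c\tau^* \ge 0$, the expectation $\E[Y_{\tau^*}]$ is well defined in $[-\infty,\infty)$ and genuinely equals $\E[g(\h_{\tau^*})] - c\,\E[\tau^*]$ even when the latter is $-\infty$; hence $\E[\tau^*]=\infty$ would force $V=-\infty$, contradicting the finite lower bound established above. A secondary point worth stating explicitly is that the finiteness is driven by the \emph{unbounded linear growth} of the cost together with the boundedness of $g$, rather than by the fine structure of the residuals: the drift hypothesis of Proposition~\ref{prop:diminishing_returns} is used to guarantee existence and non-degeneracy and to name $L_0$, but any fixed layer would already serve for the lower bound.
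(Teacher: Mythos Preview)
Your proof is correct and follows essentially the same route as the paper: lower-bound $V$ via the fixed policy $\tau\equiv L_0$, upper-bound it via $g\le g_{\max}$, and solve the resulting chain for $\E[\tau^*]$, arriving at the identical bound $\E[\tau^*]\le (g_{\max}-\E[g(\h_{L_0})])/c + L_0$. You are in fact more careful than the paper on two points it glosses over---the well-posedness of the infinite-horizon problem and the validity of the decomposition $V=\E[g(\h_{\tau^*})]-c\,\E[\tau^*]$ prior to knowing $\E[\tau^*]<\infty$---and your observation that the drift condition is used only to name $L_0$ and ensure non-degeneracy (any fixed layer would serve for the lower bound) is accurate.
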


\begin{proof}
The proof proceeds by establishing both a lower and an upper bound on the value of the optimal stopping problem, $V$, and then relating these bounds to the expected optimal stopping time, $\E[\tau^*]$.

\textbf{Step 1: Define the problem and value function.}
In the infinite-horizon setting, the set of stopping times $\mathcal{T}$ includes all non-negative integer-valued random variables $\tau$ such that $\{\tau=l\} \in \F_l$ for all $l \ge 0$. The reward for stopping at time $\tau$ is $Y_\tau = g(\h_\tau) - c\tau$. The value of the problem is the supremum of the expected reward over all possible stopping times:
\[ V = \sup_{\tau \in \mathcal{T}} \E[Y_\tau] = \sup_{\tau \in \mathcal{T}} \E[g(\h_\tau) - c\tau]. \]
Let $\tau^*$ be an optimal stopping time that achieves this supremum (its existence is guaranteed under our assumptions).

\textbf{Step 2: Establish a finite lower bound for $V$.}
The value of the optimal policy, $V$, must be at least as large as the expected reward from any valid, suboptimal stopping policy. Let's consider the simple policy of always stopping at the fixed layer $L_0$, where $L_0$ is the finite layer index from the hypothesis of Proposition \ref{prop:diminishing_returns}. The stopping time $\tau = L_0$ is a valid stopping time.
\begin{align*}
V &\ge \E[Y_{L_0}] \\
  &= \E[g(\h_{L_0}) - c \cdot L_0] \\
  &= \E[g(\h_{L_0})] - c \cdot L_0.
\end{align*}
By Assumption A1, the utility function $g$ is bounded. Therefore, $\E[g(\h_{L_0})]$ is a finite value. Since $c$ and $L_0$ are also finite constants, $V$ is bounded below by a finite number. Thus, $V > -\infty$.

\textbf{Step 3: Relate $V$ to the expected optimal stopping time $\E[\tau^*]$.}
By definition of the optimal stopping time $\tau^*$, the value of the problem is given by:
\[ V = \E[Y_{\tau^*}] = \E[g(\h_{\tau^*}) - c\tau^*]. \]
By the linearity of expectation, we can separate the terms:
\begin{equation} \label{eq:appendix_value_decomp}
V = \E[g(\h_{\tau^*})] - c\E[\tau^*].
\end{equation}

\textbf{Step 4: Establish an upper bound for $V$ using $\E[\tau^*]$.}
From Assumption A1, we know that $g$ is bounded above by a constant $g_{\max}$, such that $g(\h) \le g_{\max}$ for all $\h$. This holds for the state $\h_{\tau^*}$ at the (random) stopping time $\tau^*$. Therefore:
\[ \E[g(\h_{\tau^*})] \le \E[g_{\max}] = g_{\max}. \]
Substituting this inequality back into Equation \eqref{eq:appendix_value_decomp}, we get an upper bound on $V$:
\[ V \le g_{\max} - c\E[\tau^*]. \]

\textbf{Step 5: Combine the bounds to prove finiteness of $\E[\tau^*]$.}
We have now established a chain of inequalities linking the lower bound from Step 2 with the upper bound from Step 4:
\[ \E[g(\h_{L_0})] - c L_0 \le V \le g_{\max} - c\E[\tau^*]. \]
Focusing on the leftmost and rightmost parts of this inequality chain, we have:
\[ \E[g(\h_{L_0})] - c L_0 \le g_{\max} - c\E[\tau^*]. \]
Now, we perform algebraic rearrangement to isolate $\E[\tau^*]$:
\[ c\E[\tau^*] \le g_{\max} - \E[g(\h_{L_0})] + c L_0. \]
Since the per-layer cost $c$ is strictly positive ($c>0$), we can divide both sides by $c$ without changing the direction of the inequality:
\[ \E[\tau^*] \le \frac{g_{\max} - \E[g(\h_{L_0})]}{c} + L_0. \]
All terms on the right-hand side—$g_{\max}$, $\E[g(\h_{L_0})]$, $c$, and $L_0$—are finite constants by our assumptions. Therefore, the right-hand side is a finite constant value. This establishes a finite upper bound for the expected optimal stopping time $\E[\tau^*]$.

Since $\E[\tau^*]$ is a non-negative value that is bounded above by a finite number, it must itself be finite. This completes the proof.
\end{proof}

\end{document}